\theoremstyle{plain}
\newtheorem{theorem}{Theorem}[section]
\newtheorem{lemma}[theorem]{Lemma}
\newtheorem{corollary}[theorem]{Corollary}
\theoremstyle{definition}
\newtheorem{procedure}[theorem]{Procedure}
\newtheorem{definition}[theorem]{Definition}
\theoremstyle{remark}
\newtheorem{remark}[theorem]{Remark}
\newtheorem*{theorem*}{Statement}
\newcommand{\norm}[1]{\left\lVert#1\right\rVert}
\newcommand{\dotP}[2]{\langle#1,#2\rangle}
\newcommand*{\todo}[1]{{\color{red}(ToDo: #1)}}
\title{Are GATs Out of Balance?}
\author{%
Nimrah Mustafa$^{\ast}$ \\
%CISPA Helmholtz Center for Information Security\\
%66123 Saarbr\"ucken, Germany\\
\texttt{nimrah.mustafa@cispa.de}
\And
Aleksandar Bojchevski$^{\dag}$ \\ %correct Aleks's details
%University of Cologne\\
%50923 K\"oln, Germany\\
\texttt{a.bojchevski@uni-koeln.de} 
\And
Rebekka Burkholz$^{\ast}$ \\
%CISPA Helmholtz Center for Information Security\\
%66123 Saarbr\"ucken, Germany\\
\texttt{burkholz@cispa.de} \\
\AND \\
$^{\ast}$CISPA Helmholtz Center for Information Security, 66123 Saarbr\"ucken, Germany\\
$^{\dag}$University of Cologne, 50923 K\"oln, Germany}
\begin{document}

\maketitle

\begin{abstract}

While the expressive power and computational capabilities of graph neural networks (GNNs) have been theoretically studied, their optimization and learning dynamics, in general, remain largely unexplored. Our study undertakes the Graph Attention Network (GAT), a popular GNN architecture in which a node's neighborhood aggregation is weighted by parameterized attention coefficients. We derive a conservation law of GAT gradient flow dynamics, which explains why a high portion of parameters in GATs with standard initialization struggle to change during training. This effect is amplified in deeper GATs, which perform significantly worse than their shallow counterparts. To alleviate this problem, we devise an initialization scheme that balances the GAT network. Our approach i) allows more effective propagation of gradients and in turn enables trainability of deeper networks, and ii) attains a considerable speedup in training and convergence time in comparison to the standard initialization. Our main theorem serves as a stepping stone to studying the learning dynamics of positive homogeneous models with attention mechanisms.

\end{abstract}

\section{Introduction}

A rapidly growing class of model architectures for graph representation learning are Graph Neural Networks (GNNs)~\cite{Gori2005NewModel} which have achieved strong empirical performance across various applications such as social network analysis \cite{Bian2020Rumor}, drug discovery~\cite{Kearnes2016Molecular}, recommendation systems~\cite{Zhang2020Inductive}, and traffic forecasting~\cite{Wu2019Wavenet}. This has driven research focused on constructing and assessing specific architectural designs \cite{designSpaceGNNs} tailored to various tasks.

On the theoretical front, mostly the expressive power~\cite{xu2018Powerful,jegelka2022theory} and computational capabilities~\cite{Scarselli2009Computational} of GNNs have been studied. However, there is limited understanding of the underlying learning mechanics. We undertake a classical variant of GNNs, the Graph Attention Network (GAT) \cite{gatv1,gat}. 
It overcomes the limitation of standard neighborhood representation averaging in GCNs \cite{gcnKipf,gcnHamilton} by employing a self-attention mechanism \cite{transformer} on nodes.
Attention performs a weighted aggregation over a node's neighbors to learn their relative importance.
This increases the expressiveness of the neural network. GAT is a popular model that continues to serve as strong baseline. %  enables GAT architecture to achieve state-of-the-art performance on benchmark learning tasks, particularly node classification. 
%As a result, GAT is the state-of-the-art for many learning tasks, particularly node classification \cite{}. 

However, a major limitation of message-passing GNNs in general, and therefore GATs as well, is severe performance degradation when the depth of the network is even slightly increased. SOTA results are reportedly achieved by models of 2 or 3 layers. This problem has largely been attributed to over-smoothing \cite{oversmoothing}, a phenomenon in which node representations become indistinguishable when multiple layers are stacked. To relieve GNNs of over-smoothing, practical techniques inspired by classical deep neural networks tweak the training process, e.g. by normalization \cite{graphnorm,pairnorm,groupnorm,nodenorm} or regularization \cite{dropgnn,dropedge,propreg,ladies}, and impose architectural changes such as skip connections (dense and residual) \cite{deepGCNs,optimGNN2021xu} or offer other engineering solutions \cite{1KlayersGNN} or their combinations \cite{bag2023chen}.  
Other approaches to overcome over-smoothing include learning CNN-like spatial operators from random paths in the graph instead of the point-wise graph Laplacian operator \cite{pathGCN} and learning adaptive receptive fields (different `effective' neighborhoods for different nodes) \cite{ndls, dagnn, jknets}.
Another problem associated with loss of performance in deeper networks is over-squashing \cite{oversquashing}, but we do not discuss it since non-attentive models are more susceptible to it, which is not our focus.

%The theoretical understanding of the difficulty of training deeper GNNs has been inconsistent. 
Alternative causes for the challenged trainability of deeper GNNs beyond over-smoothing is hampered signal propagation during training (i.e. backpropagation) \cite{gradientflowGCN,modelDegGNN,depthBenefitsGNNs}. To alleviate this problem for GCNs, a dynamic addition of skip connections to vanilla-GCNs, which is guided by gradient flow, and a topology-aware isometric initialization are proposed in \cite{gradientflowGCN}. A combination of orthogonal initialization and regularization facilitates gradient flow in \cite{orthogonalGNNs}.
%They facilitate gradient flow by hybrid (random+identity) initialization, an additional layer to transform feature weights, and regularization. 
To the best of our knowledge, these are the only works that discuss the trainability issue of deep GNNs from the perspective of signal propagation, which studies a randomly initialized network at initialization. 
Here, we offer an insight into a mechanism that allows these approaches to improve the trainability of deeper networks that is related to the entire gradient flow dynamics. 
We focus in particular on GAT architectures and the specific challenges that are introduced by attention.

Our work translates the concept of neuronwise balancedness from traditional deep neural networks to GNNs, where a deeper understanding of gradient dynamics has been developed and norm balancedness is a critical assumption that induces successful training conditions. 
%to study the training dynamics of neural network paramet
%to study gradient flow 
%In the context of classic neural network architectures, however, a deeper understanding of gradient dynamics has been developed. 
%Unlike the GNN domain where optimization mechanics has not been yet studied much, a relatively deeper understanding of gradient dynamics in traditional deep feedforward neural networks (DNNs) and modern convolutional neural networks (CNNs) has been developed. 
One reason is that the degree of initial balancedness is conserved throughout the training dynamics as described by gradient flow (i.e. a model of gradient descent with infinitesimally small learning rate or step size).  
Concretely, for fully connected feed-forward networks and deep convolutional neural networks with continuous homogenous activation functions such as ReLUs, 
%and to study the training dynamics in simplified settings.
% It ensures that the squared Euclidian norm of input parameters to a neuron are equal to the squared norm of output parameters.
% As a result, gradients induce changes of similar size relative their parameters. 
% In the opposite case, if the parameters are out of balance (i.e. the norms are unequal), large parameters are have difficulties changing during training. 
% This offers a new perspective on the reason behind low trainability of GATs (and thus GCNs by specialization) and outlines a promising solution. 
the difference between the squared $l2$-norms of incoming and outgoing parameters to a neuron stays approximately constant during training ~\cite{Du2018Algorithmic,le2022training}. 
%In linear networks, a stronger version of this conservation law relates also incoming and outgoing parameters between different neurons~\cite{Arora2018Optimization}. 
Realistic optimization elements such as finite learning rates, batch stochasticity, momentum, and weight decay break the symmetries induced by these laws. 
Yet, gradient flow equations can be extended to take these practicalities into account \cite{Kunin2021Neural}.
%and 
%the resulting learning dynamics have also been derived by \cite{Kunin2021Neural}.

Inspired by these advances, we derive a conservation law for GATs with positive homogeneous activation functions such as ReLUs.
As GATs are a generalization of GCNs, most aspects of our insights can also be transferred to other architectures.
Our consideration of the attention mechanism would also be novel in the context of classic feed-forward architectures and could have intriguing implications for transformers that are of independent interest.
In this work, however, we focus on GATs and derive a relationship between model parameters and their gradients, which induces a conservation law under gradient flow. 
Based on this insight, we identify a reason for the lack of trainability in deeper GATs and GCNs, which can be alleviated with a balanced parameter initialization. 
%, which deepens our understanding of one of the most pressing issues in the GNN community. 
Experiments on multiple benchmark datasets demonstrate that our proposal is effective in mitigating the highlighted trainability issues, as it leads to considerable training speed-ups and enables significant parameter changes across all layers.
This establishes a causal relationship between trainability and parameter balancedness also empirically, as our theory predicts. 
The main theorem presented in this work serves as a stepping 
stone to studying the learning dynamics of positive homogeneous models with
attention mechanisms such as transformers \cite{transformer} and in particular, vision transformers which require depth more than NLP transformers\cite{deepVisionTransformer}.
  Our contributions are as follows. 
    \begin{itemize}[leftmargin=.3cm]
\itemsep0em 
  \item We derive a conservation law of gradient flow dynamics for GATs, including its variations such as shared feature weights and multiple attention heads. 
  \item This law offers an explanation for the lack of trainability of deeper GATs with standard initialization.%\looseness=-1 % in terms of gradient propagation and parameter change based on our derived law.
    \item To demonstrate empirically that our theory has established a causal link between balancedness and trainability, we propose a balanced initialization scheme that improves the trainability of deeper networks and attains considerable speedup in training and convergence time, as our theory predicts. 
    %We devise a  procedure to balance the parameter norms on the neuron level and propose a balanced initialization.
     %This balanced initialization improves the trainability of deeper networks and attains considerable speedup in training and convergence time.
    %\item Experiments demonstrate that a balanced initialization significantly improves the trainability of deeper networks, and attains considerable speedup in training and convergence time, thus validating our theoretical insights. 
    \end{itemize}

\section{Theory: Conservation laws in GATs}

\paragraph{Preliminaries}
For a graph $G=(V,E)$ with node set $V$ and edge set $E \subseteq V\times V$, where the neighborhood of a node $v$ is given as $\mathcal{N}(v)=\{u \vert (u,v) \in E\}$, a GNN layer computes each node's representation by aggregating over its neighbors' representation. In GATs, this aggregation is weighted by parameterized attention coefficients $\alpha_{uv}$, which indicate the importance of node $u$ for $v$. 

Given input representations $h_v^{l-1}$ for all nodes $v\in V$, a GAT \footnote{This definition of GAT is termed GATv2 in \cite{gat}. Hereafter, we refer to GATv2 as GAT.} layer $l$ transforms those to:
\begin{align}
    h_v^{l} &= \phi (\sum_{u\in \mathcal{N}(v)} \alpha_{uv}^l \cdot W_s^{l} h_u^{l-1} ), \;\;\text{where} 
    \label{GATdef1} \\
      \alpha_{uv}^l &= \frac{\text{exp}((a^l)^\top \cdot \text{LeakyReLU}(W_s^l h_u^{l-1} + W_t^l h_v^{l-1}))}{\sum_{u'\in \mathcal{N}(v)}\text{exp}( (a^l)^\top \cdot \text{LeakyReLU}(W_s^l h_{u'}^{l-1} + W_t^l h_v^{l-1}) ))}. 
      \label{GATdef2}
    %e(h_u,h_v) &= (a^l)^\top \cdot \text{LeakyReLU}(W_s^l h_u^{l-1} + W_t^l h_v^{l-1}) 
   % \label{GATdef3}
\end{align}

% \begin{equation}
% h_v^{l} = \phi (\sum_{u\in \mathcal{N}(v)} \alpha_{uv}^l \cdot W_s^{l} h_u^{l-1} )
% \label{GATlayer}
% \end{equation}
% where, given a score function $e(h_u,h_v) = (a^l)^\top \cdot \text{LeakyReLU}(W_s^l h_u^{l-1} + W_t^l h_v^{l-1}])$,

% \begin{equation}
%    \alpha_{uv}^l = \frac{e(h_u,h_v)}{\sum_{u'\in \mathcal{N}(v)} e(h_u',h_v)}
%     \label{attnCoef}
% \end{equation}

We consider $\phi$ to be a positively homogeneous activation functions (i.e $\phi(x) = x\phi'(x)$ and consequently, $\phi(ax)=a\phi(x)$ for positive scalars $a$), such as a ReLU $\phi(x) = \max\{x,0\}$ or LeakyReLU $\phi(x) = \max\{x,0\} + -\alpha \max\{-x,0\}$. The feature transformation weights $W_{s}$ and $W_{t}$ for source and target nodes, respectively, may also be shared such that $W=W_{s}=W_{t}$. 

\begin{definition}
Given training data $\{(x_m,y_m)\}_{m=1}^{M} \subset \mathbb{R}^d \times \mathbb{R}^p$ for $M\leq V$, let $f: \mathbb{R}^d \rightarrow \mathbb{R}^p$ be the function represented by a network constructed by stacking $L$ GAT layers as defined in Eq. (\ref{GATdef1}) and (\ref{GATdef2}) with $W=W_s=W_t$ and $h^0_m=g(x_m)$. 
Each layer $l \in [L]$ of size $n_l$ has associated parameters: a feature weight matrix $W^l \in \mathbb{R}^{n_{l}\times n_{l-1}}$ and an attention vector $a^l \in \mathbb{R}^{n_{l}}$, where $n_0=d$ and $n_L=p$. Given a differentiable loss function $\ell: \mathbb{R}^d \times \mathbb{R}^p \rightarrow \mathbb{R}$, the loss $\mathcal{L} = \nicefrac{1}{M}\sum_{i=1}^M\ell(f(x_m),y_m)$ is used to update model parameters $w \in \{W^l,a^l\}_{l=1}^{L}$ with learning rate $\gamma$ by gradient descent, i.e., $w^{t+1}=w^{t} - \gamma\nabla_w \mathcal{L}$, where $\nabla_w \mathcal{L}=[\nicefrac{\partial \mathcal{L}}{\partial w_1},\dots,\nicefrac{\partial \mathcal{L}}{\partial w_{\vert w \vert}}]$ and $w^0$ is set by the initialization scheme. For an infinitesimal $\gamma \rightarrow 0$, the dynamics of gradient descent behave similarly to gradient flow defined by $\nicefrac{\text{d} w}{\text{d} t}= - \nabla_w\mathcal{L}$, where $t$ is the continuous time index.

\label{problemDef}
\end{definition}

%We define a network of $L$ GAT layers defined in Equations (\ref{GATlayer}) and (\ref{attnCoef}).   Then, given a differentiable loss function $\ell: \mathbb{R}^d \times \mathbb{R}^p \rightarrow \mathbb{R}$, the loss $\mathcal{L} = \nicefrac{1}{M}\sum_{i=1}^M\ell(f(x_m),y_m)$. Also, let $\nabla_v \mathcal{L}$ be the partial gradients' vector $[\nicefrac{\partial \mathcal{L}}{\partial v_1},\dots,\nicefrac{\partial \mathcal{L}}{\partial v_{\vert v \vert}}]$ and $\dotP{\cdot}{\cdot}$ be the scalar product.

We use $W[i,:]$, $W[:,i]$, and $a[i]$ to denote the $i^{th}$ row of $W$, column of $W$, and entry of $a$, respectively. Note that $W^l[i,:]$ is the vector of weights incoming to neuron $i \in [n_l]$ and $W^{l+1}[:,i]$ is the vector of weights outgoing from the same neuron. For the purpose of discussion, let $i\in [d_l], j \in [d_{l-1}]$, and $k \in [d_{l+1}]$, as depicted in Fig. \ref{inOutParams}. $\dotP{\cdot}{\cdot}$ denotes the scalar product.

% \begin{figure}
% \centering
%     \includegraphics[width=0.3\textwidth]{}
%     \caption{Parameters to balance for node $m$ in layer $l$}
%     \label{inOutParams}
% \end{figure}

%\begin{wrapfigure}{r}{0.3\textwidth}
% \begin{figure}[h]
% \centering
%     \includegraphics[width=0.3\textwidth]{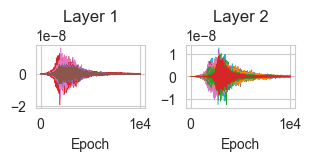}
%     \caption{\todo{add name for quantity on y-axis} for a $3$ layer GAT using SGD}
%     \label{gradientFlowSGD}
% \end{figure}
%\end{wrapfigure}

\begin{figure}[t]
\begin{minipage}{\textwidth}    
\begin{minipage}{.49\textwidth}
\centering
\captionof{table}{Attributes of init. schemes}
\label{tab:initNames}
%\begin{wraptable}{r}{.4\textwidth}

\begin{tblr}{
  hline{1,6} = {-}{0.08em},
  hline{2} = {-}{0.05em},
}
Init. & Feat. & Attn. & Bal.? \\
Xav           & Xavier  & Xavier    & No       \\
Xav$_{Z}$      & Xavier  & Zero      & No       \\
Bal$_{X}$      & Xavier  & Zero      & Yes      \\
Bal$_{O}$      & LLortho & Zero      & Yes      
\end{tblr}
%\end{wraptable}
\end{minipage}%
\begin{minipage}{.49\textwidth}
\centering
    \includegraphics[width=0.75\linewidth]{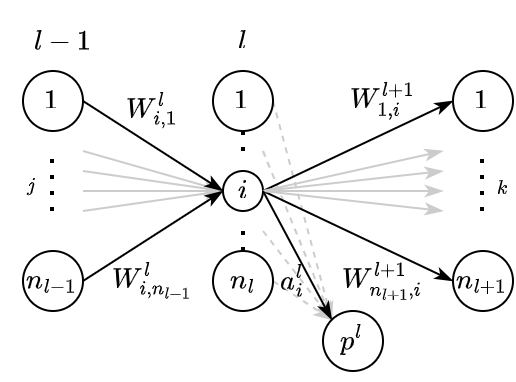}
    \captionof{figure}{Params. to balance for neuron $i \in [n_l]$}
    \label{inOutParams}
\end{minipage}
\end{minipage}
\end{figure}

\paragraph{Conservation Laws}
We focus the following exposition on weight-sharing GATs, as this variant has the practical benefit that it requires fewer parameters.
Yet, similar laws also hold for the non-weight-sharing case, as we state and discuss in the supplement. For brevity, we also defer the discussion of laws for the multi-headed attention mechanism to the supplement.
%Weight-sharing is more frequently employed in practice than no-weight-sharing, as it requires fewer parameters
%For simplicity, we present the gradient dynamics conservation law for the shared-weights case of GATs and then later state it for no weight-sharing. 

\begin{theorem}[Structure of gradients]
Given the feature weight and attention parameters $W^l$ and $a^l$ of a layer $l$ in a GAT network as described in Def. \ref{problemDef}, the structure of the gradients for layer $l \in [L-1]$ in the network is conserved according to the following law:
\begin{equation}
\dotP{W^l[i,:]}{\nabla_{W^l[i,:]}\mathcal{L}} - \dotP{a^l[i]}{\nabla_{a^l[i]} \mathcal{L}} = \dotP{W^{l+1}[:,i]}{\nabla_{W^{l+1}[:,i]}\mathcal{L}}.
\label{eqStructureOfGrads}
\end{equation}
\label{theoremStructureOfGrads}
\end{theorem}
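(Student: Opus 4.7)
My approach rests on a one-parameter scaling symmetry of the network that is exposed by the positive homogeneity of both $\phi$ and $\text{LeakyReLU}$. For fixed $l \in [L-1]$ and $i \in [n_l]$, I would exhibit a family of parameter rescalings indexed by $c > 0$ that leaves the network output $f(x_m)$ (and hence $\mathcal{L}$) invariant, and then obtain the claimed identity by differentiating this invariance at $c=1$. This is a Noether-type argument: a continuous symmetry of the loss translates into an algebraic relation between the parameters along the symmetry directions and their gradients.

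\textbf{Identifying and verifying the symmetry.} The candidate rescaling is
\begin{align*}
W^l[i,:] \;\mapsto\; c\, W^l[i,:], \qquad a^l[i] \;\mapsto\; c^{-1} a^l[i], \qquad W^{l+1}[:,i] \;\mapsto\; c^{-1} W^{l+1}[:,i],
\end{align*}
with all other parameters held fixed. First I would check that every attention coefficient $\alpha_{uv}^l$ is invariant: only the $i$-th coordinate of $W^l(h_u^{l-1}+h_v^{l-1})$ is affected and it scales by $c$, so positive homogeneity of LeakyReLU makes the $i$-th entry of $\text{LeakyReLU}(W^l(h_u^{l-1}+h_v^{l-1}))$ also scale by $c$, which is exactly cancelled by $a^l[i]\to c^{-1}a^l[i]$ inside the exponent. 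Using positive homogeneity of $\phi$ next, only the $i$-th coordinate of $h_v^l$ is affected, and it scales by $c$. At layer $l+1$, every appearance of $h^l[i]$ (both inside $\text{LeakyReLU}(W^{l+1}(h_u^l+h_v^l))$ and inside $W^{l+1}h_u^l$) is multiplied by an entry of $W^{l+1}[:,i]$, so the compensating factor $c^{-1}$ restores exact invariance of the layer-$(l+1)$ attention coefficients and pre-activations. All deeper layers are then untouched, so $f$ and hence $\mathcal{L}$ is invariant.

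\textbf{Extracting the identity and main obstacle.} Writing $\mathcal{L}(c)$ for the loss evaluated at the rescaled parameters, the identity $\mathcal{L}(c)\equiv \mathcal{L}(1)$ implies $\tfrac{d}{dc}\mathcal{L}(c)\big|_{c=1}=0$. The chain rule, together with $\tfrac{d}{dc}(cW^l[i,:])|_{c=1} = W^l[i,:]$, $\tfrac{d}{dc}(c^{-1}a^l[i])|_{c=1} = -a^l[i]$, and $\tfrac{d}{dc}(c^{-1}W^{l+1}[:,i])|_{c=1} = -W^{l+1}[:,i]$, yields
\[
\dotP{W^l[i,:]}{\nabla_{W^l[i,:]}\mathcal{L}} \;-\; a^l[i]\, \nabla_{a^l[i]}\mathcal{L} \;-\; \dotP{W^{l+1}[:,i]}{\nabla_{W^{l+1}[:,i]}\mathcal{L}} \;=\; 0,
\]
which rearranges to Eq.~(\ref{eqStructureOfGrads}). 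The main obstacle lies in the attention softmax: verifying that the rescaling leaves $\alpha_{uv}^l$ unchanged crucially requires \emph{both} $\phi$ and the LeakyReLU inside the attention logits to be positive homogeneous, so that a single compensating rescaling of the scalar $a^l[i]$ suffices to absorb the scaling of $W^l[i,:]$ inside the exponent. Once this is in place, the downstream cancellation at layer $l+1$ is forced by the column-wise structure of $W^{l+1}[:,i]$ and the remaining computation is a routine application of the chain rule.
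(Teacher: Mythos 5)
Your proposal is correct and follows essentially the same route as the paper: both identify the rescale symmetry that scales $W^l[i,:]$ by $c$ while scaling $a^l[i]$ and $W^{l+1}[:,i]$ by $c^{-1}$, verify invariance of the attention coefficients and node representations via positive homogeneity, and extract the gradient identity by the Noether-type argument (the paper invokes the rescale-invariance lemma of Kunin et al., which is exactly your differentiation of $\mathcal{L}(c)$ at $c=1$).
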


\begin{corollary}[Norm conservation of weights incoming and outgoing at every neuron]

Given the gradient structure of Theorem \ref{theoremStructureOfGrads} and positive homogeneity of the activation $\phi$ in Eq. (\ref{GATdef1}), gradient flow in the network adheres to the following invariance for $l \in [L-1]$:
\begin{equation}
\diff{}{t}\left( \norm{W^l[i,:]}^2 - \norm{a^l[i]}^2\right) = \diff{}{t} \left (\norm{W^{l+1}[:i]}^2 \right),
   \label{theoremInvarOfGradientFlow}
\end{equation}  

 It then directly follows by integration over time that the $l2$-norms of feature and attention weights incoming to and outgoing from a neuron are preserved such that

\begin{equation}
     \norm{W^l[i,:]}^2 - \norm{a^l[i]}^2 -\norm{W^{l+1}[:i]}^2 = c,
    \label{eqNormPreservation}
\end{equation}

where $c= \diff{}{t}\left( \norm{W^l[i,:]}^2 - \norm{a^l[i]}^2\right)$ at initialization (i.e. $t=0$). 
\label{theoremNormPreservation}
\end{corollary}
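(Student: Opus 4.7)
The plan is to translate Theorem~\ref{theoremStructureOfGrads}, which is an identity about scalar products between parameter blocks and their own gradients, into an identity about time derivatives of squared norms along the gradient flow. The key observation is that for any parameter block $w$, the gradient flow $\diff{w}{t} = -\nabla_w \mathcal{L}$ combined with the chain rule gives
\begin{equation*}
\diff{}{t}\norm{w}^2 \;=\; 2\dotP{w}{\diff{w}{t}} \;=\; -2\dotP{w}{\nabla_w \mathcal{L}}.
\end{equation*}
First I would instantiate this identity for each of the three parameter blocks appearing in Eq.~(\ref{eqStructureOfGrads}): the incoming row $W^l[i,:]$, the scalar attention entry $a^l[i]$, and the outgoing column $W^{l+1}[:,i]$. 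Multiplying Eq.~(\ref{eqStructureOfGrads}) through by $-2$ and substituting the three expressions above yields exactly Eq.~(\ref{theoremInvarOfGradientFlow}). Since Theorem~\ref{theoremStructureOfGrads} holds pointwise in parameter space, the resulting derivative identity propagates along the whole trajectory, not just at initialization.

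Second, I would integrate. Rewriting Eq.~(\ref{theoremInvarOfGradientFlow}) as
\begin{equation*}
\diff{}{t}\left(\norm{W^l[i,:]}^2 - \norm{a^l[i]}^2 - \norm{W^{l+1}[:,i]}^2\right) \;=\; 0,
\end{equation*}
the bracketed quantity is constant along the flow. Integrating from $t=0$ fixes its value to its value at initialization, which is precisely the constant $c$ in Eq.~(\ref{eqNormPreservation}); this also clarifies that $c$ is determined by the initialization scheme and not by any subsequent dynamics.

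The main obstacle for the corollary is actually not in the corollary itself but inherited from Theorem~\ref{theoremStructureOfGrads}, where the positive homogeneity of $\phi$ (via Euler's identity $\phi(x)=x\phi'(x)$) is what glues the incoming $W^l[i,:]$ and attention $a^l[i]$ gradients to the outgoing $W^{l+1}[:,i]$ gradient across the nonlinearity and the softmax-normalized attention coefficients. Once that identity is available, the remaining work is only chain rule plus a one-line integration. The only auxiliary requirement is mild regularity of the flow, namely that $\nabla \mathcal{L}$ is locally Lipschitz on a neighborhood of the trajectory so that standard ODE theory guarantees a unique $C^1$ solution $t \mapsto w(t)$; this holds for the ReLU/LeakyReLU activations considered, outside of a measure-zero set where one uses the standard Clarke-subdifferential extension.
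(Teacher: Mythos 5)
Your proposal is correct and follows essentially the same route as the paper's own proof: differentiate the squared norms along the gradient flow via $\diff{}{t}\norm{w}^2 = -2\dotP{w}{\nabla_w\mathcal{L}}$, substitute into the gradient-structure identity of Theorem~\ref{theoremStructureOfGrads}, and integrate. Your added remarks on ODE regularity and your reading of $c$ as the \emph{value} (not the time derivative) of the bracketed quantity at $t=0$ are both sensible refinements of what the paper writes, but the core argument is identical.
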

We denote the `degree of balancedness' by $c$ and call an initialization balanced if $c=0$.

\begin{corollary}[Norm conservation across layers]
 Given Eq. (\ref{theoremInvarOfGradientFlow}), the invariance of gradient flow at the layer level for $l\in[L-1]$ by summing over $i\in[n_l]$ is:
\begin{equation}
\diff{}{t}\left( \norm{W^l}_F^2 - \norm{a^l}^2\right) = \diff{}{t} \left (\norm{W^{l+1}}_F^2 \right).
   \label{layerLevelNorms}
\end{equation} 
   \label{theoremLayerLevelNorms}
\end{corollary}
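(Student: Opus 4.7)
The plan is to obtain the layer-level invariance by simply summing the neuron-wise invariance of Eq.~(\ref{theoremInvarOfGradientFlow}) over $i \in [n_l]$ and reinterpreting the resulting sums of squared vector norms as Frobenius (or vector) norms. Since Corollary~\ref{theoremNormPreservation} already supplies the per-neuron relation, this is a bookkeeping step rather than a new derivation.

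Concretely, I would proceed in three short steps. First, I would invoke linearity of the time derivative to pull $\sum_{i=1}^{n_l}$ inside $\diff{}{t}$ on both sides of Eq.~(\ref{theoremInvarOfGradientFlow}), which is valid because $n_l$ is finite and the parameters depend smoothly on $t$ along gradient flow. Second, I would apply the standard row/column decomposition of the squared Frobenius norm,
\begin{equation*}
\sum_{i=1}^{n_l}\norm{W^l[i,:]}^2 = \norm{W^l}_F^2, \qquad \sum_{i=1}^{n_l}\norm{W^{l+1}[:,i]}^2 = \norm{W^{l+1}}_F^2,
\end{equation*}
which holds because $W^l \in \mathbb{R}^{n_l \times n_{l-1}}$ has exactly $n_l$ rows and $W^{l+1} \in \mathbb{R}^{n_{l+1} \times n_l}$ has exactly $n_l$ columns. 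Third, since $a^l \in \mathbb{R}^{n_l}$ has scalar entries $a^l[i]$, one has $\sum_{i=1}^{n_l}\norm{a^l[i]}^2 = \sum_{i=1}^{n_l} (a^l[i])^2 = \norm{a^l}^2$. Substituting these three identities on the appropriate sides of the summed equation yields Eq.~(\ref{layerLevelNorms}).

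There is no real obstacle here; the statement is a direct aggregation of the per-neuron conservation law of Corollary~\ref{theoremNormPreservation}. The only point worth checking carefully is index consistency: the output dimension of $W^l$, the input dimension of $W^{l+1}$, and the dimension of $a^l$ all equal $n_l$, so the three sums genuinely range over the same index set $[n_l]$ and can be aligned term by term before being collapsed into the norm expressions above.
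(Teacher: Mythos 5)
Your proposal is correct and matches the paper's own (one-line) proof, which likewise just sums Eq.~(\ref{theoremInvarOfGradientFlow}) over $i\in[n_l]$ and reads off the Frobenius and vector norms; your additional care about index consistency and linearity of $\diff{}{t}$ is sound but not a departure in method.
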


\begin{remark}
    Similar conservation laws also hold for the original less expressive GAT version \cite{gatv1} as well as for the vanilla GCN \cite{gcnKipf} yielded by fixing $a^l=\mathbf{0}$.
\end{remark}

\begin{comment}
\begin{corollary}
Given the learning problem in Definition \ref{problemDef} but using GAT layers without weight sharing, i.e. $W_s \neq W_t$ as defined in Eq. \ref{GATdef1} and \ref{GATdef2}, Theorem \ref{theoremStructureOfGrads} generalizes to, for $l\in[L]$ as:

\begin{equation}
    \dotP{W_s^l[i,:]}{\nabla_{W_s^l[i,:]}\mathcal{L}} + \dotP{W_t^l[i,:]}{\nabla_{W_t^l[i,:]}\mathcal{L}} - \dotP{a^l}{\nabla_{a^l} \mathcal{L}} = \dotP{W_s^{l+1}[:,i]}{\nabla_{W_s^{l+1}[:,i]}\mathcal{L}} + \dotP{W_t^{l+1}[:,i]}{\nabla_{W_t^{l+1}[:,i]}\mathcal{L}}
\end{equation}

Similar to the case of the shared weights, the invariance of gradient flow follows for $l \in [L-1]$ as:

% \begin{equation}
%     \dotP{W_s^l[i,:]}{\nabla_{W_s^l[i,:]}\mathcal{L}} + \dotP{W_t^l[i,:]}{\nabla_{W_t^l[i,:]}\mathcal{L}} - \dotP{a^l}{\nabla_{a^l} \mathcal{L}} = \dotP{W_s^{l+1}[:,i]}{\nabla_{W_s^{l+1}[:,i]}\mathcal{L}} + \dotP{W_t^{l+1}[:,i]}{\nabla_{W_t^{l+1}[:,i]}\mathcal{L}}
% \end{equation}

\begin{equation}
\diff{}{t}\left( \norm{W_s^l[i,:]}^2 + \norm{W_t^l[i,:]}^2 - \norm{a^l[i]}^2\right) = \diff{}{t} \left (\norm{W_s^{l+1}[:i]}^2 + \norm{W_t^{l+1}[:i]}^2 \right)
    \label{balEqNoWghtShr}
\end{equation}
\end{corollary}

\begin{remark}
    For simplicity, we only consider single-headed attention in our notation, but multi-headed attention follows in a similar manner depending on whether the multiple attention heads are concatenated or averaged.(\todo{discuss whether this statement is true in all cases. What if multiple heads are aggregated in some way other than concatenation or average?}) 
\end{remark}

\end{comment}

\begin{figure}[t]
\centering
\begin{subfigure}{.25\textwidth}
  \centering
\frame{\includegraphics[width=1\linewidth]{Figures/InvarEqSGD.png}}
  \caption{$\delta$ with GD }
  \label{deltaSGD}
\end{subfigure}%
\begin{subfigure}{.25\textwidth}
  \centering
\frame{\includegraphics[width=1\linewidth]{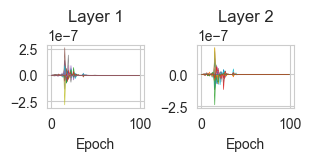}}
  \caption{$\delta$ with Adam}
  \label{deltaAdam}
\end{subfigure}%
\begin{subfigure}{.25\textwidth}
  \centering
\frame{\includegraphics[width=1\linewidth]{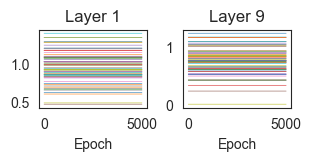}}
  \caption{$c$ with Xavier}
  \label{cXav}
\end{subfigure}%
\begin{subfigure}{.25\textwidth}
  \centering
\frame{\includegraphics[width=1\linewidth]{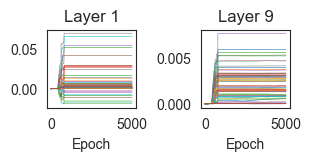}}
  \caption{$c$ with Bal$_{O}$} % TODO: Add markers over bars which converge within 5k epochs 
  \label{cLLortho}
\end{subfigure}
\caption{ Visualizing conservation laws: (a),(b) show $\delta\approx0$ (can not empirically be exactly zero due to finite $\gamma=0.1$) from Eq. (\ref{deltaEq}) for hidden layers when $L=3$. (c),(d) show the value of $c$ from Eq. (\ref{eqNormPreservation}), which is determined by the initialization and should be $0$ when the network is balanced, for $L=10$ trained with GD. With Xav., the network is unbalanced and hence $c\neq0$ for all neurons. With Bal$_{O}$, $c=0$ exactly at initialization for all neurons by design but during training, it varies slightly and $c\approx0$ as the network becomes slightly unbalanced due to the finite $\gamma=0.1$. As both $\delta$ and $c$ approximately meet their required value, we conclude that the derived law holds for practical purposes as well.}
\label{verifyEqs}
\end{figure}

\paragraph{Insights} We first verify our theory and then discuss its implications in the context of four different initializations for GATs, which are summarized in Table \ref{tab:initNames}. Xavier (Xav.) initialization \cite{GlorotInit}  is the standard default for GATs. We describe the Looks-Linear-Orthogonal (LLortho) initialization later.

In order to empirically validate Theorem \ref{theoremStructureOfGrads}, we rewrite Equation \ref{eqStructureOfGrads} to define $\delta$ as:
\begin{equation}
\delta = \dotP{W^l[i,:]}{\nabla_{W^l[i,:]}\mathcal{L}} - \dotP{a^l}{\nabla_{a^l} \mathcal{L}} - \dotP{W^{l+1}[:,i]}{\nabla_{W^{l+1}[:,i]}\mathcal{L}} = 0.
\label{deltaEq} 
\end{equation}

We observe how the value of $\delta$ varies during training in Fig. \ref{verifyEqs} for both GD and Adam optimizers with $\gamma=0.1$. Although the theory assumes infinitesimal learning rates due to gradient flow, it still holds sufficiently ($\delta\approx 0$) in practice for normally used learning rates as we validate in Fig. \ref{verifyEqs}. 
Furthermore, although the theory assumes vanilla gradient descent optimization, it still holds for the Adam optimizer (Fig. \ref{deltaAdam}). This is so because the derived law on the level of scalar products between gradients and parameters (see Theorem \ref{theoremStructureOfGrads}) still holds, as it states a relationship between gradients and parameters that is independent of the learning rate or precise gradient updates.

%This is particularly interesting because the theory of gradient flow dynamics only holds true exactly for vanilla gradient descent optimization with small learning rates. However, the derived law on the level of scalar products between gradients and parameters (see Theorem \ref{theoremStructureOfGrads}) still holds, as it states a relationship between gradients and parameters that is independent of the learning rate or precise gradient updates. %Therefore, our arguments regarding small relative gradients also apply to Adam. 

\begin{figure} [t]
\centering
\begin{subfigure}{.5\textwidth}
  \centering
\includegraphics[width=0.99\linewidth]{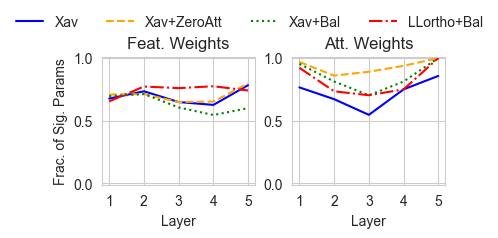}
\caption{Frac. of sig. params with relative change $>0.05$ } %define relative change as a metric and replace with the symbol/name, 0.5 is approximate, actual is 10^(0.02)-1
\label{changeInParams5L}
\end{subfigure}%
\begin{subfigure}{.5\textwidth}
  \centering  
\includegraphics[width=0.99\linewidth]{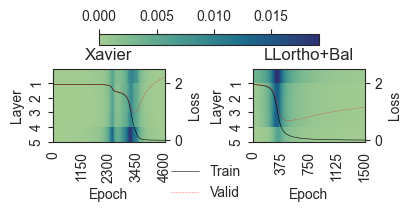}
  \caption{Rel. grad. norms of feat. wghts. \& training curve}
  \label{featGradientNorms5L}
\end{subfigure}
  \caption{For $L=5$, a high fraction of parameters change, and gradients propagate to earlier layers for both the unbalanced and balanced initialization Xav. and Bal$_O$ initialization achieve $75.5\%$ and $79.9\%$ test accuracy, respectively. Note that the balanced initialization in (b) is able to train faster.}
  \label{gradDynamics5L}
\end{figure}

\begin{figure}[t]
\centering
\begin{subfigure}{.5\textwidth}
  \centering
\includegraphics[width=0.99\linewidth]{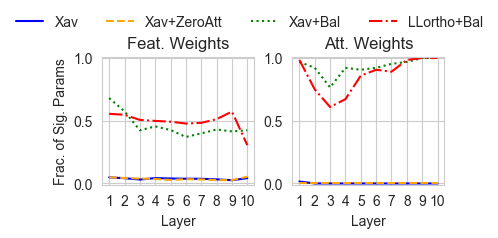}
\caption{Frac. of sig. params with relative change $>0.5$ } %define relative change as a metric and replace with the symbol/name, 0.5 is approximate, actual is 10^(0.02)-1
\label{changeInParams10L}
\end{subfigure}%
\begin{subfigure}{.5\textwidth}
  \centering  
\includegraphics[width=0.99\linewidth]{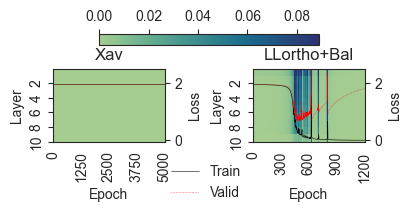}
  \caption{Rel. grad. norms of feat. wghts. \& training curve}
  \label{featGradientNorms10L}
\end{subfigure}
  \caption{For $L=10$, unbalanced initialization is unable to train the model as relative gradients are extremely small, and the parameters (and loss) do not change. However, the balanced initialization is able to propagate gradients, change parameters, and thus attain training loss $\rightarrow 0$.  Xav. and Bal$_O$ achieve $39.3\%$ and $80.2\%$ test accuracy, respectively. Note that the balanced initialization in (b) is also able to train faster than the $5$ layer network with Xavier initialization in Fig. \ref{featGradientNorms5L}.}
  \label{gradDynamics10L}
\end{figure}

\begin{figure}[t]
  \centering
\includegraphics[width=0.99\linewidth]{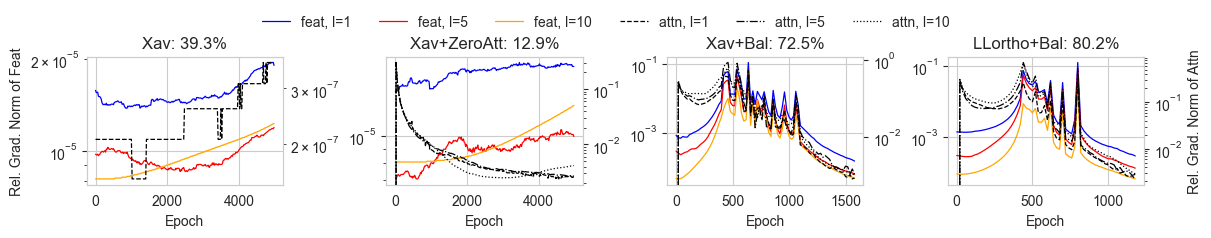}
\caption{Relative gradient norms of feature (left axis, solid) and of attention (right axis, stylized) parameters for $l\in[1,5,10]$ and $L=10$, sampled every $25$ epochs. Test accuracy is at the top. Both attention and feature gradients at the first, middle, and last layer of the network with both balanced initializations are much larger than with unbalanced initialization (note axis scales). }
\label{relGradsByLayerAllInit}
\end{figure}%

Having verified Theorem \ref{theoremStructureOfGrads}, we use it to deduce an explanation for a major trainability issue of GATs with Xavier initialization that is amplified with increased network depth.

The main reason is that the last layer has a comparatively small average weight norm, as $\mathbb{E}\norm{W^L[:,i]}^2 = 2n_L/(n_L + n_{L-1}) < < 1$, where the number of outputs is smaller than the layer width $n_L << n_{L-1}$. In contrast, $\mathbb{E} \norm{W^{L-1}[i,:]}^2 = 2n_{L-1}/(n_{L-1} + n_{L-2})= 1$ and $\mathbb{E} a^{L-1}[i]^2 = 2/(1+n_{L-1})$.
In consequence, the right-hand side of our derived conservation law 
\begin{align} \sum^{n_{L-2}}_{j=1} (W^{L-1}_{ij})^2 \frac{\nabla_{W^{L-1}_{ij}} \mathcal{L}}{W^{L-1}_{ij}}  - (a^{L-1}_{i})^2 \frac{\nabla_{a^{L-1}_{i}} \mathcal{L}}{a^{L-1}_{i}} = 
\sum^{n_L}_{k=1} (W^{L}_{ki})^2 \frac{\nabla_{W^{L}_{ki}} \mathcal{L} }{W^{L}_{ki}}
\label{lastLayerParamChange}
\end{align}
is relatively small, as $\sum^{n_L}_{k=1} (W^{L}_{ki})^2 < < 1$, on average, while the weights on the left-hand side are orders of magnitude larger. This implies that the relative gradients on the left-hand side are likely relatively small to meet the equality unless all the signs of change are set in such a way that they satisfy the equality, which is highly unlikely during training.  Therefore, relative gradients in layer $L-1$ and accordingly also the previous layers of the same dimension can only change in minor ways. %With attention parameters set to zero, the problem persists. 

The amplification of this trainability issue with depth can also be explained by the recursive substitution of Theorem \ref{theoremStructureOfGrads} in Eq. (\ref{lastLayerParamChange}) that results in a telescoping series yielding:

\begin{align} \sum_{j=1}^{n_{1}} \sum_{m=1}^{n_{0}} {W_{jm}^{(1)}}^2 \frac{\nabla_{W_{jm}^{(1)}} \mathcal{L}}{W_{jm}^{(1)}}  - \sum_{l=1}^{L-1} \sum_{o=1}^{n_{l}} {a_{o}^{(l)}}^2 \frac{\nabla_{a_{o}^{(l)}} \mathcal{L}}{a_{o}^{(l)}} = 
\sum_{i=1}^{n_{L-1}} \sum_{k=1}^{n_L} {W_{ki}^{(L)}}^2 \frac{\nabla{W_{ki}^{(L)}} \mathcal{L} }{W_{ki}^{(L)}} 
\label{firstToLastLayerParamChange}
\end{align}

Generally, $2n_1 < n_0$ and thus $\mathbb{E} \left\lVert W^1[j:] \right\rVert ^2 = 2n_1/(n_1 + n_0) < 1$. Since the weights in the first layer and the gradients propagated to the first layer are both small, gradients of attention parameters of the intermediate hidden layers must also be very small in order to satisfy Eq. (\ref{firstToLastLayerParamChange}). %As the scalar product of attention parameters and gradients summed over the first and all intermediate layers must be less than the scalar product of weights and gradients of the first layer, the attention parameters and gradients in each intermediate layer do not have much room to change which hampers trainability. 
Evidently, the problem aggravates with depth where the same value must now be distributed over the parameters and gradients of more layers. 
Note that the norms of the first and last layers are usually not arbitrary but rather determined by the input and output scale of the network.

Analogously, we can see see how a balanced initialization would mitigate the problem. Equal weight norms $\norm{W^{L-1}[i,:]}^2$ and $\norm{W^{L}[:,i]}^2$ in Eq. (\ref{lastLayerParamChange}) (as the attention parameters are set to $0$ during balancing, see Procedure \ref{howToBalance}) would allow larger relative gradients in layer $L-1$, as compared to the imbalanced case, that can enhance gradient flow in the network to earlier layers. In other words, gradients on both sides of the equation have equal room to drive parameter change.

To visualize the trainability issue, we study the relative change $\vert\nicefrac{(w^*-w_0)}{w^*}\vert$. of trained network parameters $(w^*)$ w.r.t. their initial value. In order to observe meaningful relative change, we only consider parameters with a significant contribution to the model output ($w^*\geq 10^{-4}$). We also plot the relative gradient norms of the feature weights across all layers to visualize their propagation. We display these values for $L=5$ and $L=10$ in Fig. \ref{gradDynamics5L} and \ref{gradDynamics10L}, respectively, and observe a stark contrast, similar to the trends in relative gradient norms of selected layers of a $10$ layer GAT in Fig. \ref{relGradsByLayerAllInit}. In all experiments, GAT was trained with the specified initialization on Cora using GD with $\gamma=0.1$ for $5000$ epochs.\looseness=-1

%We observe that for the shallow model, a high fraction of parameters are able to change and gradients are able to propagate to earlier layers for both the unbalanced and balanced initializations. However, for the deeper model, we see a stark contrast. The unbalanced init. are unable to train the model as relative gradients are extremely small, and consequently, the parameters (and loss) do not change. However, the balanced init. are able to propagate gradients, change parameters, and thus achieve training loss $\rightarrow 0$.

%the fraction of parameters changing is nearly zero both with Xavier (Xav) and Xav${_Z}$ initialization.

%Motivated by these insights, we devise the balancing procedure (see \ref{howToBalance}) and evaluate the effect of its application on two base initialization: Xavier and Looks-Linear-Orthogonal. We observe that with balanced initialization, at least half of both the ultimately significant feature weights and attention parameters in each layer have changed during the course of training (i.e.`learned' the task), in stark contrast to Xav. initialization. We also make this observation from the perspective of vanishing gradients in Fig. \ref{featGradientNorms10L}, and evidently, the balanced initialization allows more effective propagation of gradients and in turn enables trainability of deeper networks, \C{without the use of skip connections?}

Interestingly, the attention parameters change most in the first layer and increasingly more towards the last layer, as seen in Fig. \ref{changeInParams5L} and \ref{changeInParams10L}. This is consistently observed in both the $5$ and $10$ layer networks if the initialization is balanced, but with unbalanced initialization, the $10$ layer network is unable to produce the same effect. As our theory defines a coarse-level conservation law, such fine-grained training dynamics cannot be completely explained by it.

% We can also explain this by our derived conservation law, 
% \begin{align}
% \sum^{n_{0}}_{j=1} (W^{1}_{ij})^2 \frac{\nabla_{W^{1}_{ij}} \mathcal{L}}{W^{1}_{ij}}  - (a^{1}_{i})^2 \frac{\nabla_{a^{1}_{i}} \mathcal{L}}{a^{1}_{i}} = 
% \sum^{n_2}_{k=1} (W^{2}_{ki})^2 \frac{\nabla_{{W^{2}}_{ki}}\mathcal{L}}{W^{2}_{ki}}.
% \label{firstLayerParamChange}
% \end{align}
% Since generally, $n_1 << n_0$, the $\mathbb{E}\norm{W^1[i:]}^2 = 2n_1/(n_1 + n_0)$ is smaller than the intermediate layers, allowing larger relative gradients for $W^1$ which, by
% Eq. 
%  (\ref{firstLayerParamChange}), implies larger relative gradients for $a^1$. Since balanced initializations also partially exhibit this behaviour, we speculate a representational benefit could also drive it. The direct neighbors of a node are considered the most informative for the learning task, and the first attention layer decides which neighbors are more important for a specific task. With depth, the learned representation of neighbors becomes increasingly informative and thus leads to a higher activity of the attention mechanism.
%  %For instance, some original (input) features of a node's neighbors are likely to be strong indicators of its importance whereas, with depth, the learned representation of neighbors becomes increasingly informative and thus leads to a higher activity of the attention mechanism.

Since the attention parameters are set to $0$ in the balanced initializations (see Procedure \ref{howToBalance}), as ablation, we also observe the effect of initializing attention and feature parameters with zero and Xavier, respectively. From Eq.(\ref{GATdef2}), $a^l=0$ leads to  $\alpha_{uv} = \nicefrac{1}{\vert \mathcal{N}(v)\vert}$, implying that all neighbors $u$ of the node $v$ are equally important at initialization. Intuitively, this allows the network to learn the importance of neighbors without any initially introduced bias and thus avoids the `chicken and egg' problem that arises in the initialization of attention over nodes\cite{knyazev2019understanding}. Although this helps generalization in some cases (see Fig. \ref{CoraResults}),  it alone does not help the trainability issue as seen in Fig. \ref{changeInParams10L}. It may in fact worsen it (see Fig. \ref{relGradsByLayerAllInit}), which is explained by Eq. \ref{lastLayerParamChange}. %thereby
These observations underline the further need for parameter balancing.

% I would also make an argument how different layers relate to each other 

% e.g. scalar product layer L = sum over all the attention norms of the previous layers plus weight norm of last layer 

% (by recursively applying the conservation law you get this)

% which then explains why it is difficult in general to allow the last layer to move

% however, nonzero attention can actually help with that during training if it can move

%Table \ref{tab:initNames}. The balancing procedure (see \todo{refer to section}) is carried out after the feature weight matrix $W$ and $a$ are initialized with their respective schemes.

\begin{procedure}[Balancing]
Based on Eq. (\ref{eqNormPreservation}) from the norm preservation law \ref{theoremNormPreservation}, we note that in order to achieve balancedness, (i.e. set $c=0$ in Eq.(\ref{eqNormPreservation})), the randomly initialized parameters $W^l$ and $a^l$ must satisfy the following equality for $l\in[L]$: 
    \begin{equation*}
        \norm{W^l[i,:]}^2 - \norm{a^l[i]}^2= \norm{W^{l+1}[:i]}^2 
        \label{balEqWith0Att}
    \end{equation*}
 This can be achieved by scaling the randomly initialized weights as follows:
\begin{enumerate}
    \item Set $a^l=\mathbf{0}$ for $l\in[L]$.
    \item Set $W^1[i,:] = \frac{W^1[i,:]}{\norm{W^1[i,:]} } \sqrt{\beta_i} $, for $i\in [n_1]$ where $\beta_i$ is a hyperparameter
    \item Set $W^{l+1}[:,i] = \frac{W^{l+1}[:,i]}{\norm{W^{l+1}[:,i]}} \norm{W^{l}[i,:]}$ for $i\in[n_l]$ and $l\in[L-1]$
\end{enumerate}
\label{howToBalance}
\end{procedure}

In step 2, $\beta_i$ determines the initial squared norm of the incoming weights to neuron $i$ in the first layer. It can be any constant thus also randomly sampled from a normal or uniform distribution. We explain why we set $\beta_i=2 $ for $i\in[n_1]$ in the context of an orthogonal initialization. % as we focus on studying the implications of our theoretical insights.  

\begin{remark}
    This procedure balances the network starting from $l=1$ towards $l=L$. Yet, it could also be carried out in reverse order by first defining $\norm{W^L[:,i]}^2=\beta_i$ for $i \in [n_{L-1}]$. 
\end{remark}

%\C{do we need to mention how, in general, we do not want the weights to be too large and almost in the scale as that of Xavier (i.e. in terms of optimization, since we already know weight magnitudes can be manipulated without impacting generalization) cite paper on initialization scale as well?}

\paragraph{Balanced Orthogonal Initialization}

The feature weights $W^l$ for $l \in [L]$ need to be initialized randomly before balancing, which can simply be done by using Xavier initialization. However, existing works have pointed out benefits of orthogonal initialization to achieve initial dynamical isometry for DNNs and CNNs \cite{goodInit,orthoInit,orthoInitLinearProof}, as it enables training very deep architectures. In line with these findings, we initialize $W^l$, before balancing, to be an orthogonal matrix with a looks-linear (LL) mirrored block structure \cite{crelu,shaGrads}, that achieves perfect dynamical isometry in perceptrons with ReLU activation functions. Due to the peculiarity of neighborhood aggregation, the same technique does not induce perfect dynamical isometry in GATs (or GNNs). Exploring how dynamical isometry can be achieved or approximated in general GNNs could be an interesting direction for future work. Nevertheless, using a (balanced) LL-orthogonal initialization enhances trainaibility, particularly of deeper models. We defer further discussion on the effects of orthogonal initialization, such as with identity matrices, to the appendix. 

We outline the initialization procedure as follows: Let $\parallel$  and $\models$ denote row-wise and column-wise matrix concatenation, respectively. Let us draw a random orthonormal submatrix $U^l$ and define $W^1= [ U^1 \parallel -U^1 ]$ where $U^1 \in \mathbb{R}^{\frac{n_1}{2} \times n_0}$,  $W^l= [[ U^l \parallel -U^l ] \models [ -U^l \parallel U^l ]]$ where $U^l \in \mathbb{R}^{\frac{n_l}{2} \times \frac{n_{l-1}}{2}}$ for $l=\{2,\dots,L-1\}$, and  $W^L= [ U^L \models -U^L ]$ where  $U^L\in \mathbb{R}^{n_{L} \times \frac{n_{l-1}}{2}} $. Since $U^L$ is an orthonormal matrix, $\norm{W^{L}[:,i]}^2=2$ by definition, and by recursive application of Eq. (\ref{theoremNormPreservation} (with $a^l=\mathbf{0}$), balancedness requires that $\norm{W^1[i:,]}^2=2$. Therefore, we set $\beta_i=2$ for $i\in n_{1}$.

\section{Experiments}

The main purpose of our experiments is to verify the validity of our theoretical insights and deduce an explanation for a major trainability issue of GATs that is amplified with increased network depth. 
%that we have described in the 
Based on our theory, we understand the striking observation in Figure~\ref{changeInParams10L} that the parameters of default Xavier initialized GATs struggle to change during training.
We demonstrate the validity of our theory with the nature of our solution. 
As we balance the parameter initialization (see Procedure \ref{howToBalance}), we allow the gradient signal to pass through the network layers, which enables parameter changes during training in all GAT layers.
In consequence, balanced initialization schemes achieve higher generalization performance and significant training speed ups in comparison with the default Xavier initialization.
It also facilitates training deeper GAT models.

%The improved trainability affects all GAT layers and leads to 
%generalization and significant 
%By addressing this issue with balancing the parameter initialization, we demonstrate that we can achieve  
%we also propose a fix
%trainable 
%leads to 
%We therefore have conducted extensive experiments to 
To analyze this effect systematically, we study the different GAT initializations listed in Table~\ref{tab:initNames} on generalization (in \% accuracy) and training speedup (in epochs) using nine common benchmark datasets for semi-supervised node classification tasks. We defer dataset details to the supplement. We use the standard provided train/validation/test splits and have removed the isolated nodes from Citeseer. We use the Pytorch Geometric framework and run our experiments on either Nvidia T4 Tensor Core GPU with 15 GB RAM or Nvidia GeForce RTX 3060 Laptop GPU with 6 GB RAM.
%
%Each $L$-layer network is constructed by stacking $L-1$ layers of GATv2 of specified width followed by the last GATv2 layer with width equal to the number of classes in the given dataset. 
We allow each network to train, both with SGD and Adam, for 5000 epochs (unless it converges earlier, i.e. achieves training loss $\leq 10^{-4}$) and select the model state with the highest validation accuracy. For each experiment, the mean $\pm 95\%$ confidence interval over five runs is reported for accuracy and epochs to the best model. All reported results use ReLU activation, weight sharing and no biases, unless stated otherwise. No weight sharing leads to similar trends, which we therefore omit. \looseness=-1

%\todo{validate theory: should demonstration that norms are preserved also go in the experiment section?} \C{Since it is technically part of our experiments, yes and here we can make the comparison/argument for both SGD and Adam easily since we have already introduced the setup. Alternately, we could move Figure \ref{verifyEqs} to the insights subsection in the theory section since its caption is more self-explanatory if the equations are also on the same page. In this case maybe also move the argument about the structure of gradients too since it is relevant to the figure as well. Then all our insights can be in one place. Here we could just say our insight so and so seems to be correct because of the results? }

%\subsection{SGD}
\paragraph{SGD}

\begin{figure}
\centering
\begin{subfigure}{.5\textwidth}
  \centering
\includegraphics[width=0.99\linewidth]{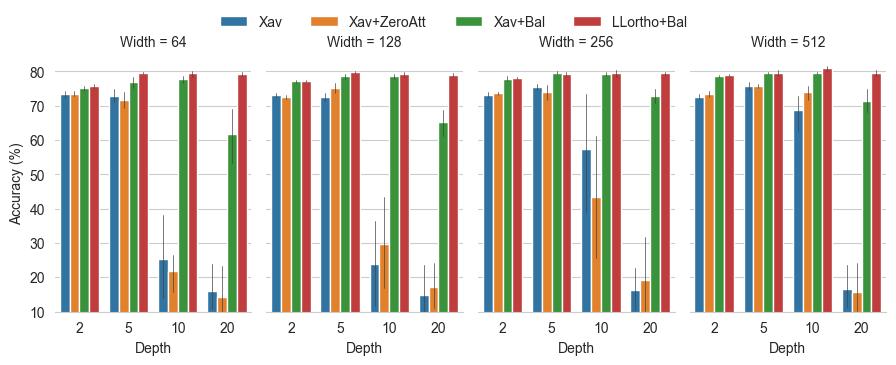}
  \caption{Test accuracy (higher is better).}
  \label{CoraAccSGD10Layer}
\end{subfigure}%
\begin{subfigure}{.5\textwidth}
  \centering
\includegraphics[width=0.99\linewidth]{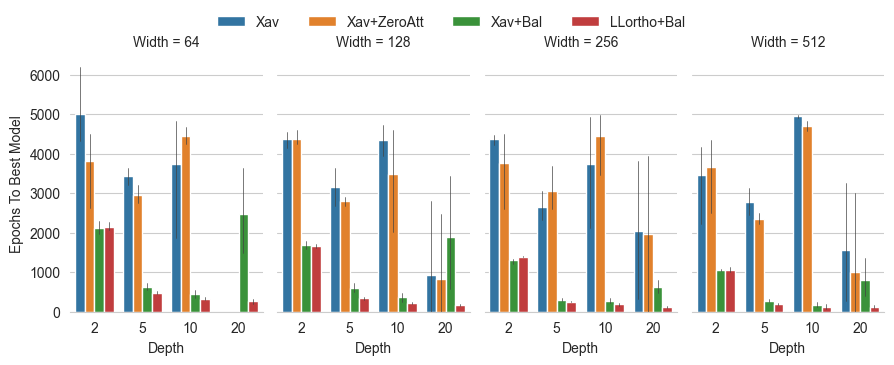}
  \caption{Epochs to best model (lower is better).} % TODO: Add markers over bars which converge within 5k epochs 
  \label{CoraEpochsSGD10Layer}
\end{subfigure}
\caption{GAT trained on Cora using SGD}
\label{CoraResults}
\end{figure} 

\begin{table}[t]
 \caption{Mean accuracy$(\%)\pm 95\%$ CI over five runs of GAT with width$=64$ trained using SGD.}
\label{tab:accSGD}
\centering

\begin{tblr}{
  cell{1}{3} = {c},
  cell{1}{4} = {c},
  cell{1}{5} = {c},
  cell{1}{6} = {c},
  cell{1}{7} = {c},
  cell{2}{1} = {r=3}{},
  cell{5}{1} = {r=3}{},
  cell{8}{1} = {r=3}{},
  cell{11}{1} = {r=3}{},
  cell{14}{1} = {r=3}{},
  cell{17}{1} = {r=3}{},
  cell{20}{1} = {r=3}{},
  cell{23}{1} = {r=3}{},
  hline{1,26} = {-}{0.08em},
  hline{2,5,8,11,14,17,20,23} = {-}{0.05em},
}
                                        & Init.  & $L=2 $                   & $L=5 $                   & $L=10 $                  & $L=20 $                   & $L=40 $                  \\
\begin{sideways}Citeseer\end{sideways}  & Xav    & $71.82 \pm 2.73$          & $58.40 \pm 2.25$          & $24.70 \pm 8.90$          & $19.23 \pm 1.54$           & $18.72 \pm 1.15$          \\
                                        & Bal$_X$ & $71.62 \pm 0.80$          & $68.83 \pm 1.62$          & $64.13 \pm 1.57$          & $54.88 \pm 7.95$           & $42.63 \pm 17.47$         \\
                                        & Bal$_O$ & $\mathbf{72.02 \pm 0.63}$ & $\mathbf{70.63 \pm 0.60}$ & $\mathbf{68.83 \pm 1.68}$ & $\mathbf{68.70 \pm 1.18}$  & $\mathbf{63.40 \pm 1.43}$ \\
\begin{sideways}Pubmed\end{sideways}    & Xav    & $77.26 \pm 1.39$          & $70.68 \pm 2.16$          & $67.32 \pm 10.70$         & $36.52 \pm 11.50$          & $27.20 \pm 13.99$         \\
                                        & Bal$_X$ & $\mathbf{78.02 \pm 0.73}$ & $75.66 \pm 1.81$          & $\mathbf{77.60 \pm 1.56}$ & $76.44 \pm 1.70$           & $75.74 \pm 2.94$          \\
                                        & Bal$_O$ & $77.68 \pm 0.45$          & $\mathbf{76.62 \pm 1.59}$ & $77.04 \pm 2.14$          & $\mathbf{78.20 \pm 0.61}$  & $\mathbf{77.80 \pm 1.41}$ \\
\begin{sideways}Actor\end{sideways}     & Xav    & $27.32 \pm 0.59$          & $24.60 \pm 0.93$          & $24.08 \pm 0.80$          & $22.29 \pm 3.26$           & $19.46 \pm 5.75$          \\
                                        & Bal$_X$ & $26.00 \pm 0.59$          & $23.93 \pm 1.42$          & $\mathbf{24.21 \pm 0.78}$ & $\mathbf{24.74 \pm 1.14}$  & $23.88 \pm 0.97$          \\
                                        & Bal$_O$ & $\mathbf{26.59 \pm 1.03}$ & $\mathbf{24.61 \pm 0.77}$ & $24.17 \pm 0.62$          & $24.24 \pm 1.05$           & $\mathbf{23.93 \pm 1.53}$ \\
\begin{sideways}Chameleon\end{sideways} & Xav    & $\mathbf{52.81 \pm 1.37}$ & $54.21 \pm 1.05$          & $30.31 \pm 5.96$          & $22.19 \pm 2.04$           & $22.28 \pm 3.15$          \\
                                        & Bal$_X$ & $51.18 \pm 1.94$          & $\mathbf{54.21 \pm 0.82}$ & $\mathbf{52.11 \pm 3.72}$ & $51.89 \pm 1.89$           & $38.64 \pm 10.31$         \\
                                        & Bal$_O$ & $50.00 \pm 3.07$          & $53.95 \pm 1.81$          & $51.84 \pm 3.21$          & $\mathbf{52.72 \pm 0.13}$  & $\mathbf{44.30 \pm 1.61}$ \\
\begin{sideways}Cornell\end{sideways}   & Xav    & $\mathbf{42.70 \pm 2.51}$ & $41.08 \pm 2.51$          & $\mathbf{42.70 \pm 1.34}$ & $25.41 \pm 14.64$          & $22.70 \pm 13.69$         \\
                                        & Bal$_X$ & $41.08 \pm 6.84$          & $35.14 \pm 11.82$         & $41.08 \pm 2.51$          & $40.00 \pm 4.93$           & $\mathbf{37.84 \pm 5.62}$ \\
                                        & Bal$_O$ & $42.16 \pm 1.64$          & $\mathbf{43.24 \pm 2.12}$ & $36.76 \pm 5.02$          & $\mathbf{35.68 \pm 3.29}$  & $36.22 \pm 3.42$          \\
\begin{sideways}Squirrel\end{sideways}  & Xav    & $35.20 \pm 0.44$          & $40.96 \pm 0.92$          & $21.65 \pm 1.52$          & $20.23 \pm 1.69$           & $19.67 \pm 0.29$          \\
                                        & Bal$_X$ & $\mathbf{35.95 \pm 1.69}$ & $40.98 \pm 0.87$          & $\mathbf{38.98 \pm 1.49}$ & $38.35 \pm 1.07$           & $25.38 \pm 4.62$          \\
                                        & Bal$_O$ & $35.83 \pm 0.92$          & $\mathbf{42.52 \pm 1.19}$ & $38.85 \pm 1.36$          & $\mathbf{39.15 \pm 0.44}$  & $\mathbf{26.57 \pm 1.83}$ \\
\begin{sideways}Texas\end{sideways}     & Xav    & $60.00 \pm 1.34$          & $60.54 \pm 3.42$          & $58.92 \pm 1.34$          & $49.73 \pm 20.97$          & $17.84 \pm 26.98$         \\
                                        & Bal$_X$ & $\mathbf{60.54 \pm 1.64}$ & $\mathbf{61.62 \pm 2.51}$ & $\mathbf{61.62 \pm 2.51}$ & $\mathbf{58.92 \pm 2.51}$  & $56.22 \pm 1.34$          \\
                                        & Bal$_O$ & $60.00 \pm 1.34$          & $57.30 \pm 1.34$          & $56.76 \pm 0.00$          & $58.38 \pm 4.55$           & $\mathbf{57.30 \pm 3.91}$ \\
\begin{sideways}Wisconsin\end{sideways} & Xav    & $\mathbf{51.37 \pm 6.04}$ & $51.37 \pm 8.90$          & $51.76 \pm 3.64$          & $43.14 \pm 25.07$          & $31.76 \pm 31.50$         \\
                                        & Bal$_X$ & $49.80 \pm 8.79$          & $\mathbf{54.51 \pm 4.19}$ & $47.84 \pm 7.16$          & $\mathbf{50.59 \pm 10.49}$ & $41.18 \pm 1.54$          \\
                                        & Bal$_O$ & $49.80 \pm 4.24$          & $55.69 \pm 3.64$          & $\mathbf{51.76 \pm 5.68}$ & $49.02 \pm 4.36$           & $\mathbf{48.24 \pm 4.52}$ 
\end{tblr}

\end{table}

We first evaluate the performance of the different initialization schemes of GAT models using vanilla gradient descent. 
Since no results have been previously reported using SGD on these datasets, to the best of our knowledge, we find that setting a learning rate of $0.1$, $0.05$ and $0.005$ for $L=[2,5]$, $L=[10,20]$, and $L=40$, respectively, allows for reasonably stable training on Cora, Citeseer, and Pubmed. For the remaining datasets, we set the learning rate to $0.05,0.01$, $0.005$ and $0.0005$ for $L=[2,5], L=10$, $L=20$, and $L=40$, respectively. Note that we do not perform fine-tuning of the learning rate or other hyperparameters with respect to performance and use the same settings for all initializations to allow fair comparison. 

Figure \ref{CoraResults} highlights that models initialized with balanced parameters, Bal$_{O}$, consistently achieve the best accuracy and significantly speed up training, even on shallow models of $2$ layers. 
In line with our theory, the default Xavier (Xav) initialization hampers model training (see also Figure~\ref{changeInParams10L}).
Interestingly, the default Xavier (Xav) initialized deeper models tend to improve in performance with width but cannot compete with balanced initialization schemes.
For example, the accuracy achieved by Xav for $L=10$ increases from $24.7\%$ to $68.7\%$ when the width is increased from $64$ to $512$. We hypothesize that the reason why width aids generalization performance is that a higher number of (almost random) features supports better models.
This would also be in line with studies of overparameterization in vanilla feed-forward architectures, where higher width also aids random feature models \cite{
arora2019convergence,Ba2020GeneralizationOT,yehudai2022power,impact2020Sankararaman}. %\todo{add references on this here -> see seminar for inspiration}

The observation that width overparameterized models enable training deeper models may be of independent interest in the context of how overparameterization in GNNs may be helpful. 
However, training wider {\it and} deeper (hence overall larger) models is computationally inefficient, even for datasets with the magnitude of Cora. 
In contrast, the Bal$_{O}$ initialized model for $L=10$ is already able to attain $79.7\%$ even with width$=64$ and improves to $80.9\%$  with width$=512$. 
Primarily the parameter balancing must be responsible for the improved performance, as it is not sufficient to simply initialize the attention weights to zero, as shown in Figure~\ref{changeInParams10L} and \ref{relGradsByLayerAllInit}.

For the remaining datasets, we report only the performance of networks with $64$ hidden dimension in Table \ref{tab:accSGD} for brevity. 
Since we have considered the Xav$_{Z}$ initialization only as an ablation study and find it to be ineffective (see Figure \ref{CoraAccSGD10Layer}), we do not discuss it any further. 

Note that for some datasets (e.g.Pubmed, Wisconsin), deeper models (e.g. $L=40$) are unable to train at all with Xavier initialization, which explains the high variation in test accuracy across multiple runs as, in each run, the model essentially remains a random network. The reduced performance even with balanced initialization at higher depth may be due to a lack of convergence within $5000$ epochs. Nevertheless, the fact that the drop in performance is not drastic, but rather gradual as depth increases, indicates that the network is being trained, i.e. trainability is not an issue. 

As a stress test, we validate that GAT networks with balanced initialization maintain their trainability at even larger depths of 64 and 80 layers (see Fig. \ref{allResultsSGD} in appendix). In fact, the improved performance and training speed-up of models with balanced initialization as opposed to models with standard initialization is upheld even more so for very deep networks.

%We speculate that the drop in performance with depth even when using balanced initialization is not necessarily due to a lack of trainability but could be due to over-smoothing for two reasons: i) the performance is substantially larger than when initialized with Xavier, ii) the drop is relatively small and not drastic, but rather gradual with depth.
%Although over-smoothing is not the primary reason behind reduced performance at larger depths, it can not be completely disregarded.  The  number of optimal layers $l$ (i.e. aggregation of information from $l$ hop neighborhood) is a decisive factor of model performance because it depends on the underlying structure of the data and its homophily or heterophily levels. However, this is not within the scope of our work. To this end, researchers have proposed adaptive receptive fields (different `effective' neighborhoods for different nodes) \cite{ndls, dagnn, jknets} (but do not vary the number of layers above 8). In order to employ such techniques, it is crucial to enable the trainability of deeper networks.

Apart from Cora, Citeseer and Pubmed, the other datasets are considered heterophilic in nature and standard GNN models do not perform very well on them. State-of-the-art performance on these datasets is achieved by specialized models. 
We do not compare with them for two reasons: i) they do not employ an attention mechanism, which is the focus of our investigations, but comprise various special architectural elements and ii) our aim is not to outperform the SOTA, but rather highlight a mechanism that underlies the learning dynamics and can be exploited to improve trainability. We also demonstrate the effectiveness of a balanced initialization in training deep GATs in comparison to LipschitzNorm \cite{lipnorm}, a normalization technique proposed specifically for training deep GNNs with self-attention layers such as GAT  (see Table \ref{tab:compareLipNorm} in appendix).

%demonstrate the effectiveness of our proposed balanced initialization scheme for GATs in comparison to the standard, the effect of which can be explained based on 
%, surprisingly, been overlooked so far.  

%\todo{add a smaller table for convergence times of only models which converge with SGD} - better not to. it is not too conclusive.

%\subsection{Adam}

\paragraph{Adam}

\begin{figure}[t]
\centering
\begin{subfigure}{.5\textwidth}
  \centering
\includegraphics[width=0.99\linewidth]{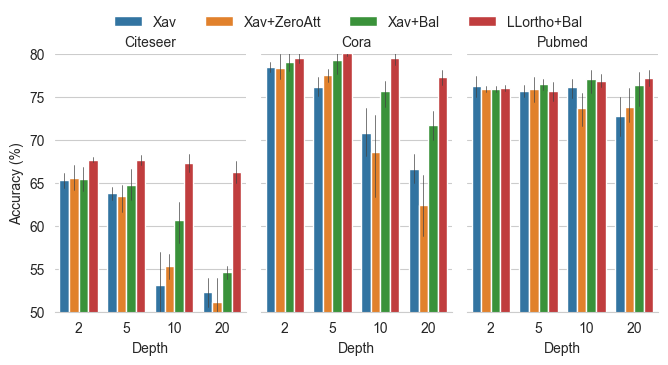}
  \caption{Test accuracy.}
  \label{AdamAccuracy}
\end{subfigure}%
\begin{subfigure}{.5\textwidth}
  \centering
\includegraphics[width=0.99\linewidth]{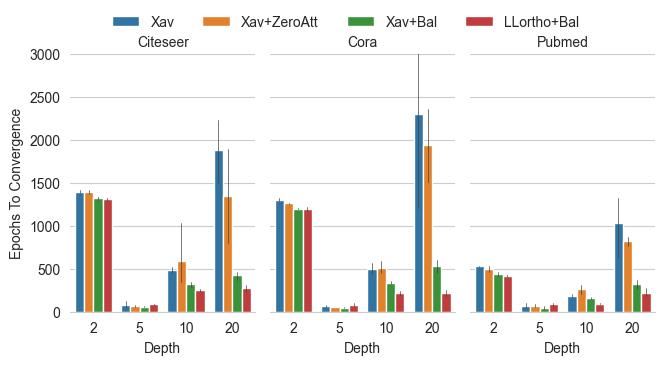}
  \caption{Epochs to convergence.} % TODO: Add markers over bars which converge within 5k epochs 
  \label{AdamConvergence}
\end{subfigure}
\caption{GAT with $64$ hidden dimensions trained using Adam.}
\label{AdamResults}
\end{figure} 

Adam, the most commonly used optimizer for GNNs, stabilizes the training dynamics and can compensate partially for problematic initializations. However, the drop in performance with depth, though smaller, is inevitable with unbalanced initialization. As evident from Figure \ref{AdamResults}, Bal$_{O}$ initialized models achieve higher accuracy than or at par with Xavier initialization and converge in fewer epochs. As we observe with SGD, this difference becomes more prominent with depth, despite the fact that Adam itself significantly improves the trainability of deeper networks over SGD. Our argument of small relative gradients (see Eq.(\ref{lastLayerParamChange})) also applies to Adam. We have used the initial learning rates reported in the literature \cite{gat}: $0.005$ for Cora and Citeseer, and $0.01$ for Pubmed for the $2$ and $5$ layer networks. To allow stable training of deeper networks, we reduce the initial learning rate by a factor $0.1$ for the $10$ and $20$ layer networks on all three datasets.

\paragraph{Architectural variations}
We also consider other architectural variations such as employing ELUs instead of ReLUs, using multiple attention heads, turning off weight sharing, and addition of standard elements such as weight decay and dropout. In all cases, the results follow a similar tendency as already reported. Due to a lack of space, we defer the exact results to the appendix. These variations further increase the performance of networks with our initialization proposals, which can therefore be regarded as complementary. Note that residual skip connections between layers are also supported in a balanced initialization provided their parameters are initialized with zero. However, to isolate the contribution of the initialization scheme and validate our theoretical insights, we have focused our exposition on the vanilla GAT version. 

\paragraph{Limitations}
The derived conservation law only applies to the self-attention defined in the original GAT and GATv2 models, and their architectural variations such as 
$\omega$GAT\cite{wgat} (see Fig. \ref{wGATonCora} in appendix). Note that the law also holds for the non-attentive GCNs  (see Table \ref{tab:gcnResults} in appendix) which are a special case of GATs (where the attention parameters are simply zero). Modeling different kinds of self-attention such as the dot-product self-attention in \cite{superGAT} entails modification of the conservation law, which has been left for future work.
%We emphasize that the proposal of balanced initialization is meant to be complementary to such techniques. %\todo{add table for one dataset, two initializations and one experiment for each of the elements }
%\begin{wrapfigure}{r}{.5\textwidth}

\begin{comment}
\begin{figure}[h]
\centering
\begin{subfigure}{.5\textwidth}
  \centering
\includegraphics[width=0.99\linewidth]{Figures/Temp/featGradNormRelativeToWghtNormXavInit.png}
\caption{Standard Initialization} 
\label{}
\end{subfigure}%
\begin{subfigure}{.5\textwidth}
  \centering  
\includegraphics[width=0.99\linewidth]{Figures/Temp/featGradNormRelativeToWghtNormBalInit.png}
  \caption{Balanced Initialization}
  \label{}
\end{subfigure}
\caption{Relative Gradient Norms of Feature Weights}
\end{figure}

\begin{figure}[h]
\centering
\begin{subfigure}{.5\textwidth}
  \centering
\includegraphics[width=0.99\linewidth]{Figures/Temp/featGradNormRelativeToWghtNormXavInit.png}
\caption{Standard Initialization} 
\label{}
\end{subfigure}%
\begin{subfigure}{.5\textwidth}
  \centering  
\includegraphics[width=0.99\linewidth]{Figures/Temp/attnGradNormRelativeToWghtNormBalInit.png}
  \caption{Balanced Initialization}
  \label{}
\end{subfigure}
\caption{Relative Gradient Norms of Attention Weights}
\label{}
\end{figure}
\end{comment}
%\input{conclusion}

\section{Discussion}
GATs \citep{gat} are powerful graph neural network models that form a cornerstone of learning from graph-based data.
The dynamic attention mechanism provides them with high functional expressiveness, as they can flexibly assign different importance to neighbors based on their features.
However, as we slightly increase the network depth, the attention and feature weights face difficulty changing during training, which prevents us from learning deeper and more complex models.
%As a consequence, state-of-the-art results are achieved with shallow architectures consisting only of 2-3 layers.
We have derived an explanation of this issue in the form of a structural relationship between the gradients and parameters that are associated with a feature.
%and is induced by the positive homogeneity of the involved activation functions.  
This relationship implies a conservation law that preserves a sum of the respective squared weight and attention norms during gradient flow. 
Accordingly, if weight and attention norms are highly unbalanced as is the case in standard GAT initialization schemes, relative gradients for larger parameters do not have sufficient room to increase. 

This phenomenon is similar in nature to the neural tangent kernel (NTK) regime \cite{ntk,tensorprogram}, where only the last linear layer of a classic feed-forward neural network architecture can adapt to a task.
Conservation laws for basic feed-forward architectures \cite{Du2018Algorithmic,arora2018stronger,le2022training} do not require an infinite width assumption like NTK-related theory and highlight more nuanced issues for trainability.
Furthermore, they are intrinsically linked to implicit regularization effects during gradient descent \cite{razin2020implicit}. %\cite{smith2021origin},
%, interestingly, they have not 
%implicit regularization of gradient descent
Our results are of independent interest also in this context, as we incorporate the attention mechanism into the analysis, which has implications for sequence models and transformers as well. 

One of these implications is that an unbalanced initialization hampers the trainability of the involved parameters.
Yet, the identification of the cause already contains an outline for its solution.
Balancing the initial norms of feature and attention weights leads to more effective parameter changes and significantly faster convergence during training, even in shallow architectures. 
%To further boost this effect, we propose 
To further increase the trainability of feature weights, we endow them with a balanced orthogonal looks-linear structure, which induces perfect dynamical isometry in perceptrons \cite{dyniso} and thus enables signal to pass even through very deep architectures. 
Experiments on multiple benchmark datasets have verified the validity of our theoretical insights and isolated the effect of different modifications of the initial parameters on the trainability of GATs.

\section{Acknowledgements}

We gratefully acknowledge funding from the European Research Council (ERC) under the Horizon Europe Framework Programme (HORIZON) for proposal number 101116395 SPARSE-ML.

\newpage
\bibliography{main}

\begin{thebibliography}{10}

\bibitem{oversquashing}
Uri Alon and Eran Yahav.
\newblock On the bottleneck of graph neural networks and its practical implications.
\newblock In {\em International Conference on Learning Representations}, 2021.

\bibitem{arora2019convergence}
Sanjeev Arora, Nadav Cohen, Noah Golowich, and Wei Hu.
\newblock A convergence analysis of gradient descent for deep linear neural networks.
\newblock In {\em International Conference on Learning Representations}, 2019.

\bibitem{arora2018stronger}
Sanjeev Arora, Rong Ge, Behnam Neyshabur, and Yi~Zhang.
\newblock Stronger generalization bounds for deep nets via a compression approach, 2018.

\bibitem{Ba2020GeneralizationOT}
Jimmy Ba, Murat~A. Erdogdu, Taiji Suzuki, Denny Wu, and Tianzong Zhang.
\newblock Generalization of two-layer neural networks: An asymptotic viewpoint.
\newblock In {\em International Conference on Learning Representations}, 2020.

\bibitem{layernorm}
Jimmy~Lei Ba, Jamie~Ryan Kiros, and Geoffrey~E. Hinton.
\newblock Layer normalization, 2016.

\bibitem{shaGrads}
David Balduzzi, Marcus Frean, Lennox Leary, JP~Lewis, Kurt Wan-Duo Ma, and Brian McWilliams.
\newblock The shattered gradients problem: If resnets are the answer, then what is the question?
\newblock In {\em International Conference on Machine Learning}, 2018.

\bibitem{bartlett2018gradient}
Peter~L. Bartlett, David~P. Helmbold, and Philip~M. Long.
\newblock Gradient descent with identity initialization efficiently learns positive definite linear transformations by deep residual networks.
\newblock In {\em International Conference on Machine Learning}, 2018.

\bibitem{Bian2020Rumor}
T.~Bian, X.~Xiao, T.~Xu, P.~Zhao, W.~Huang, Y.~Rong, and J.~Huang.
\newblock Rumor detection on social media with bi-directional graph convolutional networks.
\newblock In {\em AAAI Conference on Artificial Intelligence}, 2020.

\bibitem{gat}
Shaked Brody, Uri Alon, and Eran Yahav.
\newblock How attentive are graph attention networks?
\newblock In {\em International Conference on Learning Representations}, 2022.

\bibitem{dyniso}
Rebekka Burkholz and Alina Dubatovka.
\newblock Initialization of {ReLUs} for dynamical isometry.
\newblock In {\em Advances in Neural Information Processing Systems}, volume~32, 2019.

\bibitem{graphnorm}
Tianle Cai, Shengjie Luo, Keyulu Xu, Di~He, Tie-Yan Liu, and Liwei Wang.
\newblock Graphnorm: A principled approach to accelerating graph neural network training.
\newblock In {\em International Conference on Machine Learning}, 2021.

\bibitem{bag2023chen}
Tianlong Chen, Kaixiong Zhou, Keyu Duan, Wenqing Zheng, Peihao Wang, Xia Hu, and Zhangyang Wang.
\newblock Bag of tricks for training deeper graph neural networks: A comprehensive benchmark study.
\newblock In {\em IEEE Transactions on Pattern Analysis and Machine Intelligence}, 2023.

\bibitem{depthBenefitsGNNs}
Weilin Cong, Morteza Ramezani, and Mehrdad Mahdavi.
\newblock On provable benefits of depth in training graph convolutional networks.
\newblock In {\em Advances in Neural Information Processing Systems}, 2021.

\bibitem{lipnorm}
George Dasoulas, Kevin Scaman, and Aladin Virmaux.
\newblock Lipschitz normalization for self-attention layers with application to graph neural networks.
\newblock In {\em International Conference on Machine Learning}, 2021.

\bibitem{Du2018Algorithmic}
S.~S. Du, W.~Hu, and J.~D. Lee.
\newblock Algorithmic regularization in learning deep homogeneous models: Layers are automatically balanced.
\newblock In {\em Advances in Neural Information Processing Systems}, 2018.

\bibitem{eliasof2021pdegcn}
Moshe Eliasof, Eldad Haber, and Eran Treister.
\newblock Pde-gcn: Novel architectures for graph neural networks motivated by partial differential equations.
\newblock In {\em Advances in Neural Information Processing Systems}, 2021.

\bibitem{pathGCN}
Moshe Eliasof, Eldad Haber, and Eran Treister.
\newblock pathgcn: Learning general graph spatial operators from paths.
\newblock In {\em International Conference on Machine Learning}, 2022.

\bibitem{wgat}
Moshe Eliasof, Lars Ruthotto, and Eran Treister.
\newblock Improving graph neural networks with learnable propagation operators.
\newblock In {\em International Conference on Machine Learning}, 2023.

\bibitem{GlorotInit}
Xavier Glorot and Yoshua Bengio.
\newblock Understanding the difficulty of training deep feedforward neural networks.
\newblock In {\em International Conference on Artificial Intelligence and Statistics}, volume~9, pages 249--256, May 2010.

\bibitem{Gori2005NewModel}
M.~Gori, G.~Monfardini, and F.~Scarselli.
\newblock A new model for learnig in graph domains.
\newblock In {\em IEEE International Joint Conference on Neural Networks}, 2005.

\bibitem{orthogonalGNNs}
Kai Guo, Kaixiong Zhou, Xia Hu, Yu~Li, Yi~Chang, and Xin Wang.
\newblock Orthogonal graph neural networks.
\newblock In {\em AAAI Conference on Artificial Intelligence}, 2022.

\bibitem{gcnHamilton}
William~L. Hamilton, Rex Ying, and Jure Leskovec.
\newblock Inductive representation learning on large graphs.
\newblock In {\em Advances in Neural Information Processing Systems}, 2018.

\bibitem{orthoInitLinearProof}
Wei Hu, Lechao Xiao, and Jeffrey Pennington.
\newblock Provable benefit of orthogonal initialization in optimizing deep linear networks.
\newblock In {\em International Conference on Machine Learning}, 2020.

\bibitem{ntk}
Arthur Jacot, Franck Gabriel, and Clément Hongler.
\newblock Neural tangent kernel: Convergence and generalization in neural networks.
\newblock In {\em Advances in Neural Information Processing Systems}, 2018.

\bibitem{gradientflowGCN}
Ajay Jaiswal, Peihao Wang, Tianlong Chen, Justin~F. Rousseau, Ying Ding, and Zhangyang Wang.
\newblock Old can be gold: Better gradient flow can make vanilla-gcns great again.
\newblock In {\em Advances in Neural Information Processing Systems}, 2022.

\bibitem{jegelka2022theory}
Stefanie Jegelka.
\newblock Theory of graph neural networks: Representation and learning.
\newblock In {\em The International Congress of Mathematicians}, 2022.

\bibitem{Kearnes2016Molecular}
S.~Kearnes, K.~McCloskey, M.~Berndl, V.~Pande, and P.~Riley.
\newblock Molecular graph convolutions: Moving beyond fingerprints.
\newblock In {\em Journal of Computer-Aided Molecular Design}, 2016.

\bibitem{superGAT}
Dongkwan Kim and Alice Oh.
\newblock How to find your friendly neighborhood: Graph attention design with self-supervision.
\newblock In {\em International Conference on Learning Representations}, 2021.

\bibitem{gcnKipf}
Thomas~N. Kipf and Max Welling.
\newblock Semi-supervised classification with graph convolutional networks.
\newblock In {\em International Conference on Learning Representations}, 2017.

\bibitem{knyazev2019understanding}
Boris Knyazev, Graham~W. Taylor, and Mohamed~R. Amer.
\newblock Understanding attention and generalization in graph neural networks.
\newblock In {\em Advances in Neural Information Processing Systems}, 2019.

\bibitem{Kunin2021Neural}
D.~Kunin, J.~Sagastuy-Brena, S.~Ganguli, D.L.K. Yamins, and H.~Tanaka.
\newblock Neural mechanics: Symmetry and broken conservation laws in deep learning dynamics.
\newblock In {\em International Conference on Learning Representations}, 2021.

\bibitem{le2022training}
Thien Le and Stefanie Jegelka.
\newblock Training invariances and the low-rank phenomenon: beyond linear networks.
\newblock In {\em International Conference on Learning Representations}, 2022.

\bibitem{1KlayersGNN}
Guohao Li, Matthias Müller, Bernard Ghanem, and Vladlen Koltun.
\newblock Training graph neural networks with 1000 layers.
\newblock In {\em International Conference on Machine Learning}, 2022.

\bibitem{deepGCNs}
Guohao Li, Matthias Müller, Ali Thabet, and Bernard Ghanem.
\newblock Deepgcns: Can gcns go as deep as cnns?
\newblock In {\em International Conference on Computer Vision}, 2019.

\bibitem{oversmoothing}
Qimai Li, Zhichao Han, and Xiao-Ming Wu.
\newblock Deeper insights into graph convolutional networks for semi-supervised learning.
\newblock In {\em AAAI Conference on Artificial Intelligence}, 2018.

\bibitem{dagnn}
Meng Liu, Hongyang Gao, and Shuiwang Ji.
\newblock Towards deeper graph neural networks.
\newblock In {\em SIGKDD International Conference on Knowledge Discovery and Data Mining}, 2020.

\bibitem{goodInit}
Dmytro Mishkin and Jiri Matas.
\newblock All you need is a good init.
\newblock In {\em International Conference on Learning Representations}, 2016.

\bibitem{dropgnn}
Pál~András Papp, Karolis Martinkus, Lukas Faber, and Roger Wattenhofer.
\newblock Dropgnn: Random dropouts increase the expressiveness of graph neural networks.
\newblock In {\em Advances in Neural Information Processing Systems}, 2021.

\bibitem{geomgcn}
Hongbin Pei, Bingzhe Wei, Kevin Chen-Chuan Chang, Yu~Lei, and Bo~Yang.
\newblock Geom-gcn: Geometric graph convolutional networks.
\newblock In {\em International Conference on Learning Representations}, 2020.

\bibitem{criticalHet}
Oleg Platonov, Denis Kuznedelev, Michael Diskin, Artem Babenko, and Liudmila Prokhorenkova.
\newblock A critical look at the evaluation of gnns under heterophily: are we really making progress?
\newblock In {\em International Conference on Learning Representations}, 2023.

\bibitem{razin2020implicit}
Noam Razin and Nadav Cohen.
\newblock Implicit regularization in deep learning may not be explainable by norms.
\newblock In {\em Advances in Neural Information Processing Systems}, 2020.

\bibitem{dropedge}
Yu~Rong, Wenbing Huang, Tingyang Xu, and Junzhou Huang.
\newblock Dropedge: Towards deep graph convolutional networks on node classification.
\newblock In {\em International Conference on Learning Representations}, 2020.

\bibitem{impact2020Sankararaman}
Karthik~Abinav Sankararaman, Soham De, Zheng Xu, W.~Ronny Huang, and Tom Goldstein.
\newblock The impact of neural network overparameterization on gradient confusion and stochastic gradient descent.
\newblock In {\em International Conference on Machine Learning}, 2020.

\bibitem{orthoInit}
Andrew~M. Saxe, James~L. McClelland, and Surya Ganguli.
\newblock Exact solutions to the nonlinear dynamics of learning in deep linear neural networks.
\newblock In {\em International Conference on Learning Representations}, 2014.

\bibitem{Scarselli2009Computational}
F.~Scarselli, M.~Gori, A.C Tsoi, M.~Hagenbuchner, and G.~Monfardini.
\newblock Computational capabilities of graph neural networks.
\newblock In {\em IEEE Transactions on Neural Networks}, 2009.

\bibitem{crelu}
Wenling Shang, Kihyuk Sohn, Diogo Almeida, and Honglak Lee.
\newblock Understanding and improving convolutional neural networks via concatenated rectified linear units.
\newblock In {\em International Conference on Machine Learning}, 2016.

\bibitem{transformer}
Ashish Vaswani, Noam Shazeer, Niki Parmar, Jakob Uszkoreit, Llion Jones, Aidan~N. Gomez, Lukasz Kaiser, and Illia Polosukhin.
\newblock Attention is all you need.
\newblock In {\em Advances in Neural Information Processing Systems}, 2017.

\bibitem{gatv1}
Petar Veličković, Guillem Cucurull, Arantxa Casanova, Adriana Romero, Pietro Liò, and Yoshua Bengio.
\newblock Graph attention networks.
\newblock In {\em International Conference on Learning Representations}, 2018.

\bibitem{deepVisionTransformer}
Peihao Wang, Wenqing Zheng, Tianlong Chen, and Zhangyang Wang.
\newblock Anti-oversmoothing in deep vision transformers via the fourier domain analysis: From theory to practice.
\newblock In {\em International Conference on Learning Representations}, 2022.

\bibitem{Wu2019Wavenet}
Z.~Wu, S.~Pan, G.~Long, J.~Jiang, and C.~Zhang.
\newblock Graph wavenet for deep spatial-temporal graph modeling.
\newblock In {\em International Joint Conference on Artificial Intelligence}, 2019.

\bibitem{xu2018Powerful}
K.~Xu, W.~Hu, J.~Leskovec, and S.~Jegelka.
\newblock How powerful are graph neural networks?
\newblock In {\em International Conference on Learning Representations}, 2019.

\bibitem{jknets}
Keyulu Xu, Chengtao Li, Yonglong Tian, Tomohiro Sonobe, Ken-ichi Kawarabayashi, and Stefanie Jegelka.
\newblock Representation learning on graphs with jumping knowledge networks.
\newblock In {\em International Conference on Machine Learning}, 2018.

\bibitem{optimGNN2021xu}
Keyulu Xu, Mozhi Zhang, Stefanie Jegelka, and Kenji Kawaguchi.
\newblock Optimization of graph neural networks: Implicit acceleration by skip connections and more depth.
\newblock In {\em International Conference on Machine Learning}, 2021.

\bibitem{tensorprogram}
Greg Yang and Edward~J. Hu.
\newblock Tensor programs iv: Feature learning in infinite-width neural networks.
\newblock In Marina Meila and Tong Zhang, editors, {\em International Conference on Machine Learning}, volume 139 of {\em Proceedings of Machine Learning Research}, pages 11727--11737, 2021.

\bibitem{propreg}
Han Yang, Kaili Ma, and James Cheng.
\newblock Rethinking graph regularization for graph neural networks.
\newblock In {\em Advances in Neural Information Processing Systems}, 2021.

\bibitem{yang2016revisiting}
Zhilin Yang, William~W. Cohen, and Ruslan Salakhutdinov.
\newblock Revisiting semi-supervised learning with graph embeddings.
\newblock In {\em International Conference on Machine Learning}, 2016.

\bibitem{yehudai2022power}
Gilad Yehudai and Ohad Shamir.
\newblock On the power and limitations of random features for understanding neural networks.
\newblock In {\em Advances in Neural Information Processing Systems}, 2019.

\bibitem{graphormer}
Chengxuan Ying, Tianle Cai, Shengjie Luo, Shuxin Zheng, Guolin Ke, Di~He, Yanming Shen, and Tie-Yan Liu.
\newblock Do transformers really perform bad for graph representation?
\newblock In {\em Advances in Neural Information Processing Systems}, 2021.

\bibitem{designSpaceGNNs}
Jiaxuan You, Rex Ying, and Jure Leskovec.
\newblock Design space for graph neural networks.
\newblock In {\em Advances in Neural Information Processing Systems}, 2021.

\bibitem{gtn}
Seongjun Yun, Minbyul Jeong, Raehyun Kim, Jaewoo Kang, and Hyunwoo~J. Kim.
\newblock Graph transformer networks.
\newblock In {\em Advances in Neural Information Processing Systems}, 2020.

\bibitem{Zhang2020Inductive}
M.~Zhang and Y.~Chen.
\newblock Inductive matrix completion based on graph neural networks.
\newblock In {\em International Conference on Learning Representations}, 2020.

\bibitem{modelDegGNN}
Wentao Zhang, Zeang Sheng, Ziqi Yin, Yuezihan Jiang, Yikuan Xia, Jun Gao, Zhi Yang, and Bin Cui.
\newblock Model degradation hinders deep graph neural networks.
\newblock In {\em ACM SIGKDD Conference on Knowledge Discovery and Data Mining}, 2022.

\bibitem{ndls}
Wentao Zhang, Mingyu Yang, Zeang Sheng, Yang Li, Wen Ouyang, Yangyu Tao, Zhi Yang, and Bin Cui.
\newblock Node dependent local smoothing for scalable graph learning.
\newblock In {\em Advances in Neural Information Processing Systems}, 2021.

\bibitem{pairnorm}
Lingxiao Zhao and Leman Akoglu.
\newblock Pairnorm: Tackling oversmoothing in gnns.
\newblock In {\em International Conference on Learning Representations}, 2020.

\bibitem{groupnorm}
Kaixiong Zhou, Xiao Huang, Yuening Li, Daochen Zha, Rui Chen, and Xia Hu.
\newblock Towards deeper graph neural networks with differentiable group normalization.
\newblock In {\em Advances in Neural Information Processing Systems}, 2020.

\bibitem{nodenorm}
Kuangqi Zhou, Yanfei Dong, Kaixin Wang, Wee~Sun Lee, Bryan Hooi, Huan Xu, and Jiashi Feng.
\newblock Understanding and resolving performance degradation in graph convolutional networks.
\newblock In {\em Conference on Information and Knowledge Management}, 2021.

\bibitem{ladies}
Difan Zou, Ziniu Hu, Yewen Wang, Song Jiang, Yizhou Sun, and Quanquan Gu.
\newblock Layer-dependent importance sampling for training deep and large graph convolutional networks.
\newblock In {\em Advances in Neural Information Processing Systems}, 2019.

\end{thebibliography}
\bibliographystyle{plain}
%%%%%%%%%%%%%%%%%%%%%%%%%%%%%%%%%%%%%%%%%%%%%%%%%%%%%%%%%%%%

\newpage
\section*{Appendices}

\section{Proofs of Theorems}

%We begin by defining rescale invariance of a subset of parameters and stating the gradient structure it imposes on these parameters, as proven in \cite{Kunin2021Neural}. Based on these, 
\textbf{Notation.} 
Scalar and element-wise products are denoted $\dotP{}{}$ and $\odot$, respectively. Boldface lowercase and uppercase symbols represent vectors and matrices, respectively.

Our proof of Theorem  \ref{theoremStructureOfGrads} utilizes a rescale invariance that follows from Noether's theorem, as stated by \cite{Kunin2021Neural}.
Note that we could also derive the gradient structure directly from the derivatives, but the rescale invariance is easier to follow.

\begin{definition}[Rescale invariance]
The loss $\mathcal{L}(\theta)$ is rescale invariant with respect to disjoint subsets of the parameters $\theta_1$ and $\theta_2$ if for every $\lambda > 0$ we have $\mathcal{L}(\theta) = \mathcal{L}( (\lambda \theta_1, \lambda^{-1}\theta_2, \theta_d))$, 
 where $\theta = (\theta_1, \theta_2, \theta_d)$.
% if $\mathcal{L}(\boldsymbol{\theta}) = \mathcal{L}(\lambda \mathbbm{1}_{\theta_1} \odot \lambda^{-1} \mathbbm{1}_{\theta_2}  \odot \boldsymbol{\theta}) $ where  $\lambda \in \mathbb{R}^+$, $\mathbbm{1}_{\theta_1}$ and $\mathbbm{1}_{\theta_2}$ denote the indicator vectors for the subsets $\theta_1 \subset \theta $ and $\theta_2 \subset \theta $, respectively, and $\theta_1 \cap \theta_2 = \emptyset$.
\label{rescaleInvar}
\end{definition}
%$\boldsymbol{\theta} = vec(\theta)$
%\C{define this notation outside (above) the definition?
%\lambda \mathbf{\theta_{\theta_1}} \odot \lambda^{-1} \mathbf{\theta_{\theta_2}} 
We frequently utilize the following relationship.
\begin{lemma}[Gradient structure due to rescale invariance \cite{Kunin2021Neural}]
The rescale invariance of $\mathcal{L}$ enforces the following geometric constraint on the gradients of the loss with respect to its parameters:
\begin{equation}\label{eqGradStruct}
\dotP{ \theta_1}{\nabla_{\theta_1}\mathcal{L}} -  \dotP{\theta_2}{ \nabla_{\theta_2}\mathcal{L}} = 0.
\end{equation}
\label{gradientStructure}
\end{lemma}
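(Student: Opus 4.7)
The plan is to exploit the one-parameter family of transformations $\lambda \mapsto (\lambda \theta_1, \lambda^{-1}\theta_2, \theta_d)$ guaranteed by Definition \ref{rescaleInvar} and differentiate the invariance identity at $\lambda = 1$. This is exactly the mechanism of Noether's theorem for a continuous scaling symmetry, so the conserved quantity should drop out after a single application of the chain rule.

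Concretely, I would define the auxiliary scalar function
\begin{equation*}
F(\lambda) \;=\; \mathcal{L}\bigl( \lambda \theta_1,\; \lambda^{-1}\theta_2,\; \theta_d \bigr), \qquad \lambda > 0.
\end{equation*}
By the rescale invariance assumption, $F(\lambda) = \mathcal{L}(\theta)$ for every $\lambda > 0$, so $F$ is constant in $\lambda$ and in particular $F'(1) = 0$. Next, assuming $\mathcal{L}$ is differentiable (which is part of the setup in Definition \ref{problemDef}), I apply the chain rule to each block of coordinates. Writing $\theta_1(\lambda) = \lambda \theta_1$ and $\theta_2(\lambda) = \lambda^{-1}\theta_2$, we have $\mathrm{d}\theta_1(\lambda)/\mathrm{d}\lambda = \theta_1$ and $\mathrm{d}\theta_2(\lambda)/\mathrm{d}\lambda = -\lambda^{-2}\theta_2$, so that
\begin{equation*}
F'(\lambda) \;=\; \dotP{\theta_1}{\nabla_{\theta_1}\mathcal{L}\bigl( \lambda \theta_1, \lambda^{-1}\theta_2, \theta_d \bigr)} \;-\; \lambda^{-2}\dotP{\theta_2}{\nabla_{\theta_2}\mathcal{L}\bigl( \lambda \theta_1, \lambda^{-1}\theta_2, \theta_d \bigr)}.
\end{equation*}
Evaluating at $\lambda = 1$ and using $F'(1) = 0$ yields exactly Eq.~(\ref{eqGradStruct}).

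There is essentially no hard step: the argument is a two-line differentiation of a symmetry. The only subtlety worth mentioning is that one must justify interchanging differentiation with respect to $\lambda$ and the dependence of $\mathcal{L}$ on its arguments, which follows from standard differentiability of $\mathcal{L}$ as a function of the parameters together with the smoothness of the linear/reciprocal rescaling maps $\lambda \mapsto \lambda \theta_1$ and $\lambda \mapsto \lambda^{-1}\theta_2$ on $\lambda > 0$. After that, the identity $F'(1) = 0$ is the Noether conserved current associated with the scaling symmetry, and the lemma follows immediately without any further calculation.
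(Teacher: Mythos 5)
Your proof is correct: differentiating the invariance identity $F(\lambda)=\mathcal{L}(\lambda\theta_1,\lambda^{-1}\theta_2,\theta_d)=\mathrm{const}$ at $\lambda=1$ via the chain rule immediately gives $\dotP{\theta_1}{\nabla_{\theta_1}\mathcal{L}}-\dotP{\theta_2}{\nabla_{\theta_2}\mathcal{L}}=0$. The paper itself states this lemma without proof, citing \cite{Kunin2021Neural}, and your two-line Noether-style argument is exactly the standard derivation used there, so there is nothing to add.
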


We first use this rescale invariance to prove our main theorem for a GAT with shared feature transformation weights and a single attention head, as described in Def. \ref{problemDef}. 
The underlying principle generalizes to other GAT versions as well, as we exemplify with two other variants.
Firstly, we study the case of unshared weights for feature transformation of source and target nodes. 
Secondly, we discuss multiple attention heads.

\paragraph{Proof of Theorem \ref{theoremStructureOfGrads}} 

%We derive the structure of gradients by proving a rescale invariance, as defined in Def. \ref{rescaleInvar}, of the GAT network and then applying  Lemma \ref{gradientStructure}.
Our main proof strategy relies on identifying a multiplicative rescale invariance in GATs that allows us to apply Lemma \ref{gradientStructure}.
%Let $\mathbbm{1}_{s}$ denote the indicator vector for a set $s$ and let $\odot$ denote the element-wise product. 
%We prove the rescale invariance property defined in Def. \ref{rescaleInvar} of the GAT network described in Def. \ref{problemDef} to derive its structure of gradients using Lemma \ref{gradientStructure2}.
We identify rescale invariances for every neuron $i$ at layer $l$ that induce the stated gradient structure.

Specifically, we define the components of in-coming weights to the neuron as $\theta_1 = \{w \mid w \in W^l[i,:]\}$ and the union of all out-going edges (regarding features and attention) as $\theta_2 =\{w \mid w \in W^{l+1}[:,i]\} \cup \{a^l[i]\}$.
It is left to show that these parameters are invariant under rescaling.

Let us, therefore, evaluate the GAT loss at $\lambda \theta_1$ and $\lambda^{-1} \theta_2$ and show that it remains invariant for any choice of $\lambda > 0$. 
Note that the only components of the network that potentially change under rescaling are $h_u^{l}[i]$, $h_v^{l+1}[j]$, and $\alpha^l_{uv}$.
We denote the scaled network components with a tilde resulting in $\tilde{h}_u^{l}[i]$, $\tilde{h}_v^{l+1}[k]$, and $\tilde{\alpha}^l_{uv}$
As we show, parameters of upper layers remain unaffected, as $\tilde{h}_v^{l+1}[k]$ coincides with its original non-scaled variant $\tilde{h}_v^{l+1}[k] = h_v^{l+1}[k]$.

Let us start with the attention coefficients.
Note that $\tilde{a}^l[i]$ $= \lambda^{-1} a^l[i]$ and $\tilde{W}^l[i,j] = \lambda W^l[i,j]$.
This implies that 
\begin{align}
\tilde{\alpha}_{uv}^l &= \frac{\text{exp}(e^l_{uv})}{\sum_{u'\in\mathcal{N}(v)} \text{exp}( e^l_{uv})} =  \alpha_{uv}^l \;,\;\; \text{because}\\
    \tilde{e}^l_{uv} &= (a^l)^\top \cdot \phi_2(W^l(h_u^{l-1} + h_v^{l-1})) = e_{uv}^l,
\end{align}
 which follows from the positive homogeneity of $\phi_2$ that allows
\begin{align}
   \tilde{e}_{uv}^l &=  \lambda^{-1} a^l[i] \phi_2 (\sum_j^{n_{l-1}} \lambda W^l[i,j] (h_u^{l-1}[j] + h_v^{l-1}[j]) + \sum_{i'\neq i}^{n_l} a^l[i] \phi_2 (\sum_j^{n_{l-1}} W^l[i,j] (h_u^{l-1}[j] + h_v^{l-1}[j])   \\&= \lambda^{-1} \lambda a^l[i] \phi_2 ( \sum_j^{n_{l-1}} W^l[i,j] (h_u^{l-1}[j] + h_v^{l-1}[j]) + \sum_{i'\neq i}^{n_l} a^l[i] \phi_2 (\sum_j^{n_{l-1}} W^l[i,j] (h_u^{l-1}[j] + h_v^{l-1}[j])   \\ &= e_{uv}^l.
   \end{align}
Since $\tilde{\alpha}_{uv}^l = \alpha_{uv}^l$, it follows that 
\begin{align*}
    \tilde{h_u}^{l}[i] &= \phi_1 \left(\sum_{z\in \mathcal{N}(u)} \alpha_{zu}^l \sum_{j}^{n_{l-1}} \lambda W^l[i,j]h_z^{l-1}[j] \right) \\&= \lambda \phi_1 \left(\sum_{z\in \mathcal{N}(u)} \alpha_{zu}^l \sum_{j}^{n_{l-1}} W^l[i,j] h_z^{l-1}[j] \right)\\ &= \lambda h_u^{l}[i]
\end{align*}
In the next layer, we therefore have 
\begin{align*}
\tilde{h}_v^{l+1}[k]  &=  \phi_1 \left(\sum_{u\in \mathcal{N}(v)} \alpha_{uv}^{l+1} \sum_{i}^{n_{l}} \lambda^{-1} W^{l+1}[k,i] \tilde{h}_u^{l}[i] \right) \\ &= \phi_1 \left(\sum_{u\in \mathcal{N}(v)} \alpha_{uv}^{l+1} \sum_{i}^{n_{l}} \lambda^{-1} W^{l+1}[k,i] \lambda h_u^{l}[i] \right) \\ &= \phi_1 \left(\sum_{u\in \mathcal{N}(v)} \alpha_{uv}^{l+1} \sum_{i}^{n_{l}}  W^{l+1}[k,i]  h_u^{l}[i] \right) \\ &=  h_v^{l+1}[k]
\end{align*}

Thus, the output node representations of the network remain unchanged, and according to Def. \ref{rescaleInvar}, the loss $\mathcal{L}$ is rescale-invariant.

Consequently, as per Lemma \ref{gradientStructure}, the constraint in Eq.(\ref{eqGradStruct}) can be written as:
%constraint on the gradients $\nabla_{p_{i_1}^l}\mathcal{L}$ and $\nabla_{p_{i_2}^l}\mathcal{L}$ w.r.t. to parameters in $p_{i_1}^l$ and $p_{i_2}^l$, respectively:

% \begin{equation}
%     \dotP{\mathbbm{1}_{p_{i_1}^l}  \odot \mathbf{p}_i^l}{\nabla_{p_{i_1}^l}\mathcal{L}} - \dotP{\mathbbm{1}_{p_{i_2}^l}  \odot  \mathbf{p}_i^l}{\nabla_{p_{i_2}^l}\mathcal{L}} = 0 .
%     \label{gradientStructureGAT}
% \end{equation}

% Given the definitions of $p_{i_1}^l$ and $p_{i_2}^l$, Eq.(\ref{gradientStructureGAT}) can be used to derive the following relationship between the parameters and gradients of a GAT network at the neuron level, i.e. For $i\in[n_l], l\in[L-1]$:
\begin{equation*}
\dotP{W^l[i,:]}{\nabla_{W^l[i,:]}\mathcal{L}} - \dotP{a^l[i]}{\nabla_{a^l[i]} \mathcal{L}} - \dotP{W^{l+1}[:,i]}{\nabla_{W^{l+1}[:,i]}\mathcal{L}} = 0.
\end{equation*}

which can be rearranged to Eq.((\ref{theoremStructureOfGrads}):

\begin{equation*}
\dotP{W^l[i,:]}{\nabla_{W^l[i,:]}\mathcal{L}} - \dotP{a^l[i]}{\nabla_{a^l[i]} \mathcal{L}} = \dotP{W^{l+1}[:,i]}{\nabla_{W^{l+1}[:,i]}\mathcal{L}}.
\end{equation*}

thus proving Theorem \ref{theoremStructureOfGrads}.

\paragraph{Proof of Corollary \ref{theoremNormPreservation}}  Given that gradient flow applied on loss $\mathcal{L}$ is captured by the differential equation
\begin{equation}
    \diff{w}{t} = -  \nabla_w \mathcal{L}
    \label{diffInc}
\end{equation}
%where $\nabla_w \mathcal{L} = \diffp{\mathcal{L}}{w}$: 
 which implies:
\begin{equation}
    \diff{}{t} \norm{W^l[i,:]}^2  = 2 \dotP{ W^l[i,:]}{\diff{}{t} W^l[i,:]} = -2 \dotP{W^l[i,:]}{\nabla_{W^l[i,:]}\mathcal{L}}
    \label{dtTodw}
\end{equation}

substituting in Eq.(\ref{theoremInvarOfGradientFlow}) 
similarly for gradient flow of $\mathbf{W}^{l+1}[:,i]$ and $a^l[i]$, as done in Eq.(\ref{dtTodw}) yields Theorem \ref{theoremStructureOfGrads}:
\begin{align*}
\diff{}{t}\left( \norm{W^l[i,:]}^2 - \norm{a^l[i]}^2\right) &= \diff{}{t} \left (\norm{W^{l+1}[:i]}^2 \right). \\
-2\dotP{W^l[i,:]}{\nabla_{W^l[i,:]}\mathcal{L}} - (-2)\dotP{a^l[i]}{\nabla_{a^l[i]} \mathcal{L}} &= -2\dotP{W^{l+1}[:,i]}{\nabla_{W^{l+1}[:,i]}\mathcal{L}}.
\end{align*} 

Therefore, Eq.(\ref{theoremInvarOfGradientFlow}) and consequently Eq.(\ref{eqNormPreservation}) in Corollary \ref{theoremNormPreservation} hold.

Summing over $i \in [n_l]$ on both sides of Eq.(\ref{theoremInvarOfGradientFlow}) yields  Corollary \ref{theoremLayerLevelNorms}. 

\begin{theorem}[Structure of gradients for  GAT without weight-sharing] 
Let a GAT network as defined by Def. \ref{problemDef}  consisting of $L$ layers be given.
The feature transformation parameters $\mathbf{W}_s^l$ and $\mathbf{W}_t^l$ of the source and target nodes of an edge and attention parameters $\mathbf{a}^l$ of a GAT layer $l$ are defined according to Eq.(\ref{GATdef1}) and (\ref{GATdef2}).

Then the gradients for layer $l \in [L-1]$ in the network are governed by the following law:
\begin{equation}
\dotP{W_s^l[i,:]}{\nabla_{W_s^l[i,:]}\mathcal{L}} + \dotP{W_t^l[i,:]}{\nabla_{W_t^l[i,:]}\mathcal{L}} - \dotP{a^l[i]}{\nabla_{a^l[i]} \mathcal{L}} = \dotP{W_s^{l+1}[:,i]}{\nabla_{W_s^{l+1}[:,i]}\mathcal{L}} %\nonumber % + \dotP{W_t^{l+1}[:,i]}{\nabla_{W_t^{l+1}[:,i]}\mathcal{L}} 
\end{equation}
\label{gradStructGATwWS}
\end{theorem}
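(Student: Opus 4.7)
The plan is to mirror the proof of Theorem \ref{theoremStructureOfGrads}: find a multiplicative rescale symmetry of the loss associated with each neuron $i$ in layer $l$, then invoke Lemma \ref{gradientStructure}. In the non-weight-sharing setting, the natural partition is $\theta_1 = \{w : w \text{ entry of } W_s^l[i,:]\} \cup \{w : w \text{ entry of } W_t^l[i,:]\}$ and $\theta_2 = \{a^l[i]\} \cup \{w : w \text{ entry of } W_s^{l+1}[:,i]\} \cup \{w : w \text{ entry of } W_t^{l+1}[:,i]\}$. Both rows $W_s^l[i,:]$ and $W_t^l[i,:]$ must be scaled jointly by the same factor $\lambda$, because the attention pre-softmax $e_{uv}^l = \sum_k a^l[k]\,\mathrm{LeakyReLU}(\langle W_s^l[k,:], h_u^{l-1}\rangle + \langle W_t^l[k,:], h_v^{l-1}\rangle)$ is positively homogeneous at coordinate $i$ only when its entire $i$-th argument scales uniformly; rescaling only one of the two rows would introduce an additive perturbation inside the LeakyReLU that does not commute with it.

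Given that partition, I would verify $\mathcal{L}((\lambda\theta_1, \lambda^{-1}\theta_2, \theta_d)) = \mathcal{L}(\theta)$ in three stages paralleling the shared-weights proof. First, under $(W_s^l[i,:], W_t^l[i,:]) \mapsto \lambda (W_s^l[i,:], W_t^l[i,:])$ together with $a^l[i] \mapsto \lambda^{-1} a^l[i]$, positive homogeneity of LeakyReLU scales its $i$-th output coordinate by $\lambda$, and the $\lambda^{-1}$ on $a^l[i]$ cancels this so every $e_{uv}^l$, and hence every $\alpha_{uv}^l$, is preserved. Second, the aggregation $h_v^l[i] = \phi\!\left(\sum_u \alpha_{uv}^l \langle W_s^l[i,:], h_u^{l-1}\rangle\right)$ becomes $\lambda\, h_v^l[i]$ by positive homogeneity of $\phi$, while $h_v^l[i']$ for $i' \neq i$ is untouched. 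Third, at layer $l+1$, the $\lambda^{-1}$ rescaling of both $W_s^{l+1}[:,i]$ and $W_t^{l+1}[:,i]$ compensates the factor $\lambda$ appearing in $h_u^l[i]$ (as a source feature in the aggregation and attention) and in $h_v^l[i]$ (as a target feature in the attention), leaving every pre-softmax score and pre-activation of layer $l+1$ unchanged, after which nothing downstream can change.

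Applying Lemma \ref{gradientStructure} to this rescale symmetry then yields the balancedness identity
\begin{align*}
&\dotP{W_s^l[i,:]}{\nabla_{W_s^l[i,:]}\mathcal{L}} + \dotP{W_t^l[i,:]}{\nabla_{W_t^l[i,:]}\mathcal{L}} - \dotP{a^l[i]}{\nabla_{a^l[i]}\mathcal{L}} \\
&\qquad = \dotP{W_s^{l+1}[:,i]}{\nabla_{W_s^{l+1}[:,i]}\mathcal{L}} + \dotP{W_t^{l+1}[:,i]}{\nabla_{W_t^{l+1}[:,i]}\mathcal{L}},
\end{align*}
which reproduces the statement of Theorem \ref{gradStructGATwWS} up to the inclusion of the $W_t^{l+1}[:,i]$ contribution on the right-hand side. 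The main obstacle of the plan is precisely this asymmetric split between $W_s^{l+1}$ and $W_t^{l+1}$ in the displayed equation: since $W_t^{l+1}[:,i]$ enters layer $l+1$ only through a LeakyReLU coupled to $h_v^l[i]$ and cannot be isolated by any further rescale symmetry (one would need to rescale only a single coordinate of $h_v^l$ to separate the two, which is not available as a parameter symmetry), the $\langle W_t^{l+1}[:,i], \nabla_{W_t^{l+1}[:,i]}\mathcal{L}\rangle$ term is forced to appear. The cleanest reading of the theorem, consistent with the commented-out symmetric layer-level version earlier in the paper and with the non-shared analog of Corollary \ref{theoremNormPreservation}, is therefore that this term belongs on the right-hand side.
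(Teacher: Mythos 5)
Your proof follows the same route as the paper's: identify a per-neuron rescale symmetry and invoke Lemma \ref{gradientStructure}. The difference is in the choice of $\theta_2$, and here your version is the more careful one. The paper sets $\theta_2=\{w \mid w \in W_s^{l+1}[:,i]\}\cup\{a^l[i]\}$, omitting $W_t^{l+1}[:,i]$, and then asserts that the only quantities affected by the rescaling are $h_u^l[i]$, $h_v^{l+1}[k]$, and $\alpha_{uv}^l$; its computation of $\tilde h_v^{l+1}[k]$ silently reuses $\alpha_{uv}^{l+1}$ unchanged. As you correctly observe, that step fails without weight sharing: the layer-$(l+1)$ pre-softmax score contains $\sum_k a^{l+1}[k]\,\phi_2\bigl(\langle W_s^{l+1}[k,:],h_u^l\rangle+\langle W_t^{l+1}[k,:],h_v^l\rangle\bigr)$, and since $h_v^l[i]\mapsto\lambda h_v^l[i]$, the contribution $W_t^{l+1}[k,i]\,h_v^l[i]$ is scaled by $\lambda$ unless $W_t^{l+1}[:,i]$ is also down-scaled, so $\alpha_{uv}^{l+1}$ and hence the loss would change. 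Including $W_t^{l+1}[:,i]$ in $\theta_2$, as you do, restores the invariance and necessarily produces the extra term $\dotP{W_t^{l+1}[:,i]}{\nabla_{W_t^{l+1}[:,i]}\mathcal{L}}$ on the right-hand side — matching the symmetric form that appears in the (commented-out) corollary in the main text rather than the asymmetric equation displayed in Theorem \ref{gradStructGATwWS}. In short: your argument is sound, it proves the symmetric identity, and the asymmetric statement as printed does not follow from this rescale symmetry; the discrepancy you flag is a genuine issue with the paper's statement and proof, not with your derivation.
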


\begin{proof}
The proof is analogous to the derivation of Theorem \ref{theoremStructureOfGrads}. We follow the same principles and define the disjoint subsets $\theta_1$ and $\theta_2$ of the parameter set $\theta$,  associated with a neuron $i$ in layer $l$ accordingly, as follows:
\begin{align*}
 \theta_1&=\{w \vert w \in W_s^l[i,:]\} \cup \{w \vert w \in W_t^l[i,:]\}\\
 \theta_2&=\{w \vert w \in W_s^{l+1}[:,i]\}  \cup \{a^l[i]\} % \cup \{w \vert w \in W_t^{l+1}[:,i]\}
\end{align*}

Then, the invariance of node representations follows similarly to the proof of Theorem \ref{theoremStructureOfGrads}. 

The only components of the network that potentially change under rescaling are $h_u^{l}[i]$, $h_v^{l+1}[j]$, and $\alpha^l_{uv}$.
We denote the scaled network components with a tilde resulting in $\tilde{h}_u^{l}[i]$, $\tilde{h}_v^{l+1}[k]$, and $\tilde{\alpha}^l_{uv}$
As we show, parameters of upper layers remain unaffected, as $\tilde{h}_v^{l+1}[k]$ coincides with its original non-scaled variant $\tilde{h}_v^{l+1}[k] = h_v^{l+1}[k]$.

Let us start with the attention coefficients.
Note that $\tilde{a}^l[i]$ $= \lambda^{-1} a^l[i]$, $\tilde{W_s}^l[i,j] = \lambda W_s^l[i,j]$ and $\tilde{W}_t^l[i,j] = \lambda W_t^l[i,j]$.
This implies that 
\begin{align}
\tilde{\alpha}_{uv}^l &= \frac{\text{exp}(e^l_{uv})}{\sum_{u'\in\mathcal{N}(v)} \text{exp}( e^l_{uv})} =  \alpha_{uv}^l \;,\;\; \text{because}\\
    \tilde{e}^l_{uv} &= (a^l)^\top \cdot \phi_2(W_s^l h_u^{l-1} + W_t^l h_v^{l-1}) = e_{uv}^l,
\end{align}
 which follows from the positive homogeneity of $\phi_2$ that allows
\begin{align}
   \tilde{e}_{uv}^l &=  \lambda^{-1} a^l[i] \phi_2 (\sum_j^{n_{l-1}} \lambda W_s^l[i,j] h_u^{l-1}[j] + \lambda W_t^l[i,j] h_v^{l-1}[j] \\&+ \sum_{i'\neq i}^{n_l} a^l[i] \phi_2 (\sum_j^{n_{l-1}} W_s^l[i,j] h_u^{l-1}[j] + W_t^l[i,j] h_v^{l-1}[j])   \\&= \lambda^{-1} \lambda a^l[i] \phi_2 ( \sum_j^{n_{l-1}} W_s^l[i,j] h_u^{l-1}[j] + W_t^l[i,j] h_v^{l-1}[j] \\&+ \sum_{i'\neq i}^{n_l} a^l[i] \phi_2 (\sum_j^{n_{l-1}} W_s^l[i,j] h_u^{l-1}[j] + W_t^l[i,j] h_v^{l-1}[j])  \\ &= e_{uv}^l.
   \end{align}
Since $\tilde{\alpha}_{uv}^l = \alpha_{uv}^l$, it follows that 
\begin{align*}
    \tilde{h_u}^{l}[i] &= \phi_1 \left(\sum_{z\in \mathcal{N}(u)} \alpha_{zu}^l \sum_{j}^{n_{l-1}} \lambda W_s^l[i,j]h_z^{l-1}[j] \right) \\&= \lambda \phi_1 \left(\sum_{z\in \mathcal{N}(u)} \alpha_{zu}^l \sum_{j}^{n_{l-1}} W_s^l[i,j] h_z^{l-1}[j] \right)\\ &= \lambda h_u^{l}[i]
\end{align*}
In the next layer, we therefore have 
\begin{align*}
\tilde{h_v}^{l+1}[k]  &=  \phi_1 \left(\sum_{u\in \mathcal{N}(v)} \alpha_{uv}^{l+1} \sum_{i}^{n_{l}} \lambda^{-1} W_s^{l+1}[k,i]\tilde{h}_u^{l}[i] \right) \\ &= \phi_1 \left(\sum_{u\in \mathcal{N}(v)} \alpha_{uv}^{l+1} \sum_{i}^{n_{l}} \lambda^{-1} W_s^{l+1}[k,i] \lambda h_u^{l}[i] \right) \\ &= \phi_1 \left(\sum_{u\in \mathcal{N}(v)} \alpha_{uv}^{l+1} \sum_{i}^{n_{l}}  W_s^{l+1}[k,i]  h_u^{l}[i] \right) \\ &=  h_v^{l+1}[k]
\end{align*}

Thus, the application of Lemma \ref{gradientStructure} derives Theorem \ref{gradStructGATwWS}.
\end{proof}

%\todo{define the parameter subset p1 and p2 accordingly. Basically, $W$ gets replaced by $W_s$ and $W_t$, mention this as the more general theorem}.

\begin{theorem}[Structure of gradients for GAT with multi-headed attention]
Given the feature transformation parameters $W_k^l$ and attention parameters $a_k^l$ of an attention head $k\in[K]$ in a GAT layer  of a $L$ layer network. 
Then the gradients of layer $l \in [L-1]$ respect the law:
\begin{equation}
\sum_k^K\dotP{W^l_k[i,:]}{\nabla_{W^l_k[i,:]}\mathcal{L}} - \sum_k^K \dotP{a^l_k[i]}{\nabla_{a^l_k[i]} \mathcal{L}} = \sum_k^K \dotP{W^{l+1}_k[:,i]}{\nabla_{W^{l+1}_k[:,i]}\mathcal{L}}.
\label{gradStructMultiHeaded}
\end{equation}
\end{theorem}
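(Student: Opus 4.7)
The plan is to follow the template of the proof of Theorem~\ref{theoremStructureOfGrads}: identify a rescale invariance of the loss with respect to the parameters attached to neuron $i$ at layer $l$, and then apply Lemma~\ref{gradientStructure}. The only real change in the multi-head setting is that the rescaling must act on all $K$ heads simultaneously, which is precisely what produces the sums over $k$ on both sides of Eq.~(\ref{gradStructMultiHeaded}).

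Specifically, I would define the disjoint parameter subsets
\begin{align*}
\theta_1 &= \bigcup_{k=1}^{K} \{ w \mid w \in W^l_k[i,:]\}, \\
\theta_2 &= \bigcup_{k=1}^{K} \{ w \mid w \in W^{l+1}_k[:,i]\} \;\cup\; \bigcup_{k=1}^{K} \{ a^l_k[i]\},
\end{align*}
and check that $\mathcal{L}(\lambda\theta_1, \lambda^{-1}\theta_2, \theta_d) = \mathcal{L}(\theta)$ for every $\lambda > 0$. Head by head, the single-head argument from Theorem~\ref{theoremStructureOfGrads} carries through verbatim: scaling $W^l_k[i,:]$ by $\lambda$ and $a^l_k[i]$ by $\lambda^{-1}$ leaves each per-head coefficient $\alpha^{l,k}_{uv}$ invariant (by positive homogeneity of the LeakyReLU inside the logit) and scales the $i$-th component $h^{l,k}_u[i]$ of head $k$'s output by exactly $\lambda$.

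It remains to verify that these $\lambda$ factors are cancelled at the entry into layer $l+1$. Whether the heads are aggregated by averaging, $h^l_u = \tfrac{1}{K}\sum_k h^{l,k}_u$, or by concatenation, the only layer-$(l+1)$ weights that multiply a scaled input coming from position $i$ of head $k$ are precisely the columns I have placed in $\theta_2$, and they are rescaled by $\lambda^{-1}$, which exactly undoes the layer-$l$ scaling. Consequently $h^{l+1}_v$, all subsequent activations, and hence the loss are invariant, and Lemma~\ref{gradientStructure} yields
\begin{equation*}
\sum_{k=1}^{K} \dotP{W^l_k[i,:]}{\nabla_{W^l_k[i,:]}\mathcal{L}} - \sum_{k=1}^{K} \dotP{a^l_k[i]}{\nabla_{a^l_k[i]}\mathcal{L}} - \sum_{k=1}^{K} \dotP{W^{l+1}_k[:,i]}{\nabla_{W^{l+1}_k[:,i]}\mathcal{L}} = 0,
\end{equation*}
which rearranges to Eq.~(\ref{gradStructMultiHeaded}).

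The main obstacle I anticipate is the index bookkeeping in the concatenation variant, where the symbol $W^{l+1}_k[:,i]$ must be read as the column of the next-layer head's matrix that acts on position $i$ of head $k$'s output inside the flattened concatenated input. This is purely notational and does not alter the logic, but it is the only place that deserves care; once the indexing convention is pinned down, the argument reduces, head by head, to the single-head computation already performed in the proof of Theorem~\ref{theoremStructureOfGrads}.
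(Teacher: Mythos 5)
Your proposal is correct and rests on the same two ingredients as the paper's proof: the rescale invariance of the loss and Lemma~\ref{gradientStructure}. The one genuine difference is how the heads are handled. The paper argues head by head: since each head is independently defined by Eq.~(\ref{GATdef1}) and (\ref{GATdef2}), Theorem~\ref{theoremStructureOfGrads} is asserted to hold for each head separately, and Eq.~(\ref{gradStructMultiHeaded}) then follows by summing the $K$ per-head identities. You instead pool the parameters of all $K$ heads into a single pair $(\theta_1,\theta_2)$, rescale them simultaneously, and apply Lemma~\ref{gradientStructure} once, obtaining the summed identity directly. The paper's route, when it applies, is strictly stronger (it gives $K$ separate conservation laws, not just their sum), and that per-head invariance does hold under concatenation, where position $i$ of head $k$'s output feeds only into the designated columns of the next layer. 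Under averaging, however, $h_u^l[i]=\tfrac{1}{K}\sum_k h_u^{l,k}[i]$ mixes the heads before the outgoing weights act, so scaling a single head's incoming row by $\lambda$ does not scale $h_u^l[i]$ by $\lambda$ and cannot be undone by rescaling the shared outgoing column; only the simultaneous rescaling of all heads survives. Your formulation therefore covers both aggregation modes uniformly and is, for the averaging case, the more careful of the two arguments --- at the price of proving only the summed law rather than the per-head one. Your remark about the column-indexing convention for $W^{l+1}_k[:,i]$ in the concatenated case is apt and is exactly the bookkeeping the paper leaves implicit.
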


\begin{proof}
Each attention head is independently defined by Eq.\ref{GATdef1} and Eq.\ref{GATdef2}, and thus Theorem \ref{theoremStructureOfGrads}  holds for each head, separately. The aggregation of multiple heads in a layer is done over node representations of each head in either one of two ways:

Concatenation: $  h_v^{l} = \parallel_k^K \phi (\sum_{u\in \mathcal{N}(v)} \alpha_k{_{uv}}^l \cdot W_k^{l} h_u^{l-1} )$ , or 

Average: $ h_v^{l} = \frac{1}{K}\sum_k^K \phi (\sum_{u\in \mathcal{N}(v)} \alpha_k{_{uv}}^l \cdot W_k^{l} {h}_u^{l-1} ) $

Thus, the rescale invariance and consequent structure of gradients of the parameters in each head are not altered and Eq.(\ref{gradientStructure}) holds by summation of the conservation law over all heads. 
\end{proof}

For the most general case of a GAT layer without weight-sharing and with multi-headed attention, each term in Theorem \ref{gradStructGATwWS} is summed over all heads. Further implications analogous to Corollary \ref{theoremNormPreservation}, and \ref{theoremLayerLevelNorms} can be derived for these variants by similar principles.

%Let the network parameters be $$P=\cup_{i\in [n_l], l \in [L-1]} p_i^l\cup \{W^{L}[z,:]\vert z \in n_L\}$$. 

%Let $$P' = \bigcup_{i\in [n_l], l \in [L-1]} g(p_i^l,\lambda_i^l) \cup \{W^{L}[z,:]\vert z \in n_L\}$$.

%Let $\mathcal{L}(P)$ be the loss of the network with parameters $P$.

%The parameters observe rescale invariance as $\mathcal{L}(P) = \mathcal{L}(P')$.

\section{Training Dynamics}

In the main paper, we have shared observations regarding the learning dynamics of GAT networks of varying depths with standard initialization and or balanced initialization schemes.
Here, we report additional figures for more varied depths to complete the picture. 
%with respect to the parameter change and 
%We discuss these dynamics in terms of the change in parameter values and their gradients during training. 

All figures in this section correspond to a $L$ layer GAT network trained on Cora. For SGD, the learning rate is set to $\gamma=0.1$, $\gamma=0.05$ and $\gamma=0.01$ for $L=5$, $L=10$ and $L=20$, respectively. For Adam, $\gamma=0.005$ and $\gamma=0.0001$ for $L=5$ and $L=10$, respectively. 

Relative change of a feature transformation parameter $w$ is defined as $\vert \nicefrac{(w^*-w_0)}{w^*}\vert$ where $w_0$ is the initialized value and $w^*$ is the value for the model with maximum validation accuracy during training. Absolute change $\vert a^*-a_0\vert$ is used for attention parameters $a$ since attention parameters are initialized with $0$ in the balanced initialization. We view the fraction of parameters that remain unchanged during training separately in Figure \ref{zeroChange}, and examine the layer-wise distribution of change in parameters in Figure \ref{nonZeroChange} without considering the unchanged parameters. 

\begin{figure}[h]
\centering
\begin{subfigure}{.3\textwidth}
\centering
\includegraphics[width=0.99\linewidth]{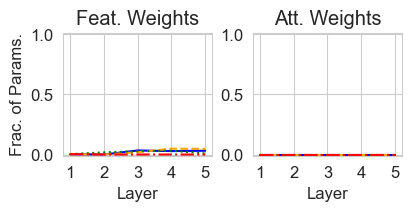}
\caption{$L=5$}
\end{subfigure}%
\begin{subfigure}{.3\textwidth}
\centering
\includegraphics[width=0.99\linewidth]{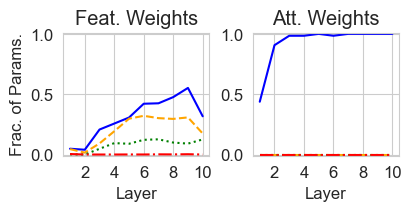}
\caption{$L=10$}
\end{subfigure}%
\begin{subfigure}{.4\textwidth}
\centering
\includegraphics[width=0.99\linewidth]{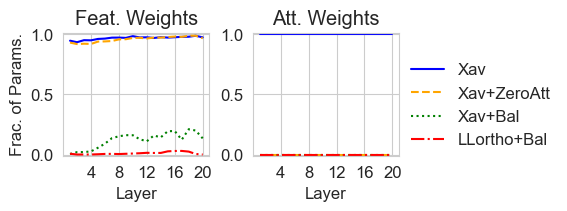}
\caption{$L=20$}
\end{subfigure}
\caption{Layer-wise fraction of feature transformation parameters $W^l$ and attention parameters $a^l$ with zero relative and absolute change, respectively, trained with SGD. For $L=2$, no parameters with zero change existed. A small number of feature weights do not change in a $5$ layer unbalanced network initialized with Xavier, but this fraction becomes significantly large when depth is increased to $L=10$. Note that $W^1$ contains a much larger number of parameters compared to the intermediate layers (specifically in this case, though it is generally common). At $L=20$, nearly all $w\in W^l$ with an unbalanced initialization (Xav. and Xav+ZeroAtt) struggle to change during training, whereas the balanced Xavier and LL-orthogonal initialization are able to drive change in most $w\in W^l$ and all $a \in a^l$ parameters, allowing the network to train. }
\label{zeroChange}
\end{figure}

To analyze how gradients vary in the network during training, we define the relative gradient norm of feature transformation and attention parameters for a layer $l$ as $\nicefrac{\norm{\nabla_{W^l} W^l}_F}{\norm{W^l}_F}$ and $\nicefrac{\norm{\nabla_{a^l} a^l}}{\norm{a^l}}$, respectively. Figures \ref{featGradientNorms5L} and \ref{featGradientNorms10L}, and Figure \ref{gradNormAdam} depict relative gradient norms for training under SGD and Adam respectively. 

\section{Additional Results}

The results of the main paper have focused on the vanilla GAT having ReLU activation between consecutive layers, a single attention head, and shared weights for feature transformation of the source and target nodes, optimized with vanilla gradient descent (or Adam) without any regularization. Here, we present additional results for training with architectural variations, comparing a balanced initialization with a normalization scheme focused on GATs, and discussing the impact of an orthogonal initialization and the applicability of the derived conservation law to other message-passing GNNs (MPGNNs).

\paragraph{Training variations}
To understand the impact of different architectural variations, common regularization strategies such as dropout and weight decay, and different activation functions, we conducted additional experiments.
For a $10$-layer network with width $64$, Table \ref{tab:variationsSGD} and \ref{tab:variationsAdam} report our results for SGD and Adam, respectively. 
The values of hyperparameters were taken from \cite{gatv1}.

%However, given the architectural variations of GATs and the common practice of using standard optimization elements such as dropout and weight decay, we study the effect of employing them, independently one at a time, with a balanced initialization and report the obtained results on Cora in Table \ref{tab:variationsSGD} and \ref{tab:variationsAdam} for SGD and Adam, respectively. The values of hyperparameters are taken from \cite{gatv1}.

\begin{table}[h]
 \caption{Mean test accuracy$(\%)\pm 95\%$ CI over five runs of GAT trained on Cora using SGD}
\label{tab:variationsSGD}
\centering
\begin{tblr}{
  column{2} = {c},
  cell{1}{3} = {c},
  cell{1}{4} = {c},
  cell{1}{5} = {c},
  cell{2}{1} = {r=2}{},
  cell{4}{1} = {r=2}{},
  cell{6}{1} = {r=2}{},
  cell{8}{1} = {r=2}{},
  cell{10}{1} = {r=2}{},
  cell{12}{1} = {r=2}{},
  hline{1,14} = {-}{0.08em},
  hline{2,4,6,8,10,12} = {-}{0.05em},
}
Variation          & $L$     & Xav              & Bal$_X$                   & Bal$_O$                   \\
None (Vanilla GAT)            & $5 $  & $73.00 \pm 3.02$  & $76.96 \pm 2.21$          & $\mathbf{79.48 \pm 0.43}$ \\
                   & $10 $ & $25.48 \pm 18.13$ & $77.72 \pm 1.49$          & $\mathbf{79.46 \pm 1.34}$ \\
attention heads = 8  & $5 $  & $73.56 \pm 2.71$  & $77.44 \pm 1.54$          & $\mathbf{79.58 \pm 0.53}$ \\
                   & $10 $ & $25.50 \pm 18.18$ & $77.02 \pm 2.76$          & $\mathbf{79.06 \pm 0.73}$ \\
activation = ELU   & $5 $  & $75.68 \pm 1.80$  & $79.20 \pm 1.07$          & $\mathbf{79.64 \pm 0.36}$ \\
                   & $10 $ & $73.02 \pm 2.27$  & $\mathbf{78.64 \pm 1.72}$ & $47.76 \pm 7.39$          \\
dropout = 0.6      & $5 $  & $42.14 \pm 15.97$ & $79.18 \pm 1.17$          & $\mathbf{81.00 \pm 0.62}$ \\
                   & $10 $ & $24.90 \pm 9.50$  & $30.94 \pm 1.04$          & $\mathbf{44.40 \pm 1.84}$ \\
weight decay = 0.0005     & $5 $  & $67.26 \pm 6.30$  & $77.36 \pm 1.74$          & $\mathbf{79.56 \pm 0.48}$ \\
                   & $10 $ & $18.78 \pm 11.96$ & $76.56 \pm 2.91$          & $\mathbf{79.40 \pm 1.15}$ \\
weight sharing = False & $5 $  & $70.80 \pm 7.00$  & $77.28 \pm 1.45$          & $\mathbf{79.82 \pm 0.63}$ \\
                   & $10 $ & $19.54 \pm 14.03$ & $76.04 \pm 1.77$          & $\mathbf{79.06 \pm 0.32}$ 
\end{tblr}
\end{table}

\begin{table}[h]
 \caption{Mean test accuracy$(\%)\pm 95\%$ CI over five runs of GAT trained on Cora using Adam}
\label{tab:variationsAdam}
\centering
\begin{tblr}{
  column{2} = {c},
  cell{1}{3} = {c},
  cell{1}{4} = {c},
  cell{1}{5} = {c},
  cell{2}{1} = {r=2}{},
  cell{4}{1} = {r=2}{},
  cell{6}{1} = {r=2}{},
  cell{8}{1} = {r=2}{},
  cell{10}{1} = {r=2}{},
  cell{12}{1} = {r=2}{},
  hline{1,14} = {-}{0.08em},
  hline{2,4,6,8,10,12} = {-}{0.05em},
}
Variation          & $L$     & Xav              & Bal$_X$                   & Bal$_O$                   \\
None (Vanilla GAT)            & $5 $  & $76.18 \pm 1.61$  & $79.38 \pm 2.24$          & $\mathbf{80.20 \pm 0.57}$ \\
                   & $10 $ & $70.86 \pm 3.99$  & $75.72 \pm 2.35$          & $\mathbf{79.62 \pm 1.27}$ \\
 attention heads = 8  & $5 $  & $75.62 \pm 1.74$  & $78.54 \pm 1.06$          & $\mathbf{79.56 \pm 0.85}$ \\
                   & $10 $ & $70.94 \pm 2.76$  & $75.48 \pm 2.48$          & $\mathbf{79.74 \pm 1.10}$ \\
activation = ELU   & $5 $  & $76.56 \pm 1.72$  & $78.72 \pm 1.02$          & $\mathbf{79.56 \pm 0.64}$ \\
                   & $10 $ & $75.30 \pm 1.42$  & $\mathbf{78.48 \pm 1.79}$ & $76.52 \pm 0.97$          \\
dropout = 0.6      & $5 $  & $76.74 \pm 1.44$  & $78.42 \pm 1.28$          & $\mathbf{79.92 \pm 0.48}$ \\
                   & $10 $ & $32.48 \pm 6.99$  & $31.76 \pm 1.33$          & $\mathbf{76.34 \pm 0.95}$ \\
weight decay = 0.0005     & $5 $  & $75.10 \pm 2.05$  & $78.52 \pm 1.41$          & $\mathbf{79.80 \pm 0.63}$ \\
                   & $10 $ & $28.74 \pm 12.04$ & $74.68 \pm 3.06$          & $\mathbf{79.70 \pm 1.14}$ \\
weight sharing = False & $5 $  & $76.56 \pm 1.72$  & $78.72 \pm 1.02$          & $\mathbf{79.56 \pm 0.64}$ \\
                   & $10 $ & $71.12 \pm 2.23$  & $73.32 \pm 1.23$          & $\mathbf{79.46 \pm 1.16}$ 
\end{tblr}
\end{table} 

We find that the balanced initialization performs similar in most variations, outperforming the unbalanced standard initialization by a substantial margin. Dropout generally does not seem to be helpful to the deeper network ($L=10$), regardless of the initialization. From Table \ref{tab:variationsSGD} and \ref{tab:variationsAdam}, it is evident that although techniques like dropout and weight decay may aid optimization, they are alone not enough to enable the trainability of deeper network GATs and thus are complementary to balancedness.

Note that ELU is not a positively homogeneous activation function, which is an assumption in out theory. In practice, however, it does not impact the Xavier-balanced initialization (Bal$_X$). 
However, the Looks-Linear orthogonal structure is specific to ReLUs. 
Therefore, the orthogonality and balancedness of the Bal$_O$ initialization are negatively impacted by ELU, although the Adam optimizer seems to compensate for it to some extent.   

In addition to increasing the trainability of deeper networks, the balanced initialization matches the state-of-the-art performance of Xavier initialized $2$ layer GAT, given the architecture and optimization hyperparameters as reported in \cite{gatv1}. 
We used an existing GAT implementation and training script\footnote{https://github.com/Diego999/pyGAT} of Cora that follows the original GAT(v1)\cite{gatv1} paper and added our balanced initialization to the code. 
As evident from Table \ref{tab:sota}, the balanced initialization matches SOTA performance of GAT(v1) on Cora $(83$-$84\%)$ (on a version of the dataset with duplicate edges). 
GAT(v2) achieved $78.5\%$ accuracy on Pubmed but the corresponding hyperparameter values were not reported. 
Hence, we transferred them from \cite{gatv1}. 
This way, we are able to match the state-of-the-art performance of GAT(v2) on Pubmed, as shown in Table \ref{tab:sota}.

\begin{table}[h]
 \caption{GAT performance on Cora and Pubmed: mean test accuracy$(\%)\pm 95\%$ CI over five runs.}.
\label{tab:sota}
\centering
\begin{tblr}{
  cell{1}{2} = {c},
  cell{1}{3} = {c},
  cell{1}{4} = {c},
  hlines = {0.08em},
  hline{2-3} = {-}{0.05em},
}
       & Xav             & Bal$_X$                & Bal$_O $                  \\
Cora   & $84.50 \pm 0.52$ & $\mathbf{84.58 \pm 0.65}$ & $84.55 \pm 0.47$          \\
Pubmed & $78.38 \pm 0.77$ & $78.52 \pm 0.54$          & $\mathbf{78.56 \pm 0.20}$
\end{tblr}
\end{table} 

%\todo{discuss how addition of skip connection doesnt currently leave the network balanced. doing that is interesting for the future to enhance trainability and overcome homophily. explore. would yield an entirely different architecture. explore how other architectures can be balanced such as those with skip connections from the first layer to every other layer or JKnets and GCNII. }

\paragraph{Comparison with Lipshitz Normalization}
A feasible comparison can be carried out with \cite{lipnorm} that proposes a Lipschitz normalization technique aimed at improving the performance of deeper GATs in particular. We use their provided code to reproduce their experiments on Cora, Citeseer, and Pubmed for 2,5,10,20 and 40-layer GATs with Lipschitz normalization and compare them with LLortho+Bal initialization in Table \ref{tab:compareLipNorm}. Note that Lipschitz normalization has been shown to outperform other previous normalization techniques for GNNs such as pair-norm \cite{pairnorm} and layer-norm \cite{layernorm}.

\begin{table}[h]
 \caption{Comparing a balanced LL-orthogonal initialization to Lipschitz normalization applied with a standard (imbalanced) Xavier initialization: the balanced initialization results in a much higher accuracy as the depth of the network increases.}
\label{tab:compareLipNorm}
\centering
\begin{tblr}{
  cells = {c},
  cell{1}{2} = {c=2}{},
  cell{1}{4} = {c=2}{},
  cell{1}{6} = {c=2}{},
  hline{1,8} = {-}{0.08em},
  hline{2-3} = {-}{0.05em},
}
Dataset & Cora &  & Citeseer &  & Pubmed & \\
Layers & Lip. Norm. & Bal$_O$ Init. & Lip. Norm. & Bal$_O$ Init. & Lip. Norm. & Bal$_O$ Init.\\
$2$ & $\mathbf{82.1}$ & $79.5$ & $65.4$ & $\mathbf{67.7}$ & $74.8$ & $\mathbf{76.0}$\\
$5$ & $77.1$ & $\mathbf{80.2}$ & $63.0$ & $\mathbf{67.7}$ & $73.7$ & $\mathbf{75.7}$\\
$10$ & $78.0$ & $\mathbf{79.6}$ & $43.6$ & $\mathbf{67.4}$ & $52.8$ & $\mathbf{76.9}$\\
$20$ & $72.2$ & $\mathbf{77.3}$ & $18.2$ & $\mathbf{66.3}$ & $23.3$ & $\mathbf{77.3}$\\
$40$ & $12.9$ & $\mathbf{75.9}$ & $18.1$ & $\mathbf{63.2}$ & $36.6$ & $\mathbf{77.5}$
\end{tblr}
\end{table} 

\paragraph{Impact of orthogonal initialization}
In the main paper, we have advocated and presented results for using a balanced LL-orthogonal initialization. Here, we discuss two special cases of orthogonal initialization (and their balanced versions): identity matrices that have also been used for GNNs in \cite{wgat,eliasof2021pdegcn}, and matrices with looks-linear structure using an identity submatrix (LLidentity) since a looks-linear structure would be more effective for ReLUs \cite{dyniso}.  

In line with \cite{wgat}, identity and LLidentity are used to initialize the hidden layers while Xavier is used to initialize the first and last layers. A network initialized with identity is balanced by adjusting the weights of the first and last layer to have norm $1$ (as identity matrices have row-wise and column-wise norm). A network with LLidentity initialization is balanced to have norm $2$ in the first and last layer, similar to LLortho initialization. We compare the performance using these four base initializations (one standard Xavier, and three orthogonal cases) and their balanced counterparts in Fig. \ref{allResultsSGD}. 

We observe that balancing an (LL-)orthogonal initialization results in an improvement in the generalization ability of the network in most cases and speeds up training, particularly for deeper networks. However, note that an (LL-)orthogonal initialization itself also has a positive effect on trainability in particular of deep networks.
Contributing to this fact is the mostly-balanced nature of an (LL-)orthogonal initialization i.e. given hidden layers of equal dimensions the network is balanced in all layers except the first and last layers (assuming zero attention parameters), which allows the model to train, as opposed to the more severely imbalanced standard Xavier initialization.
This further enforces the key takeaway of our work that norm imbalance at initialization hampers the trainability of GATs. In addition, the LLortho+Bal initialization also speeds up training over the LLortho initialization even in cases in which the generalization performance of the model is at par for both initializations.

\begin{figure}[h]
\centering
\includegraphics[width=0.99\linewidth]{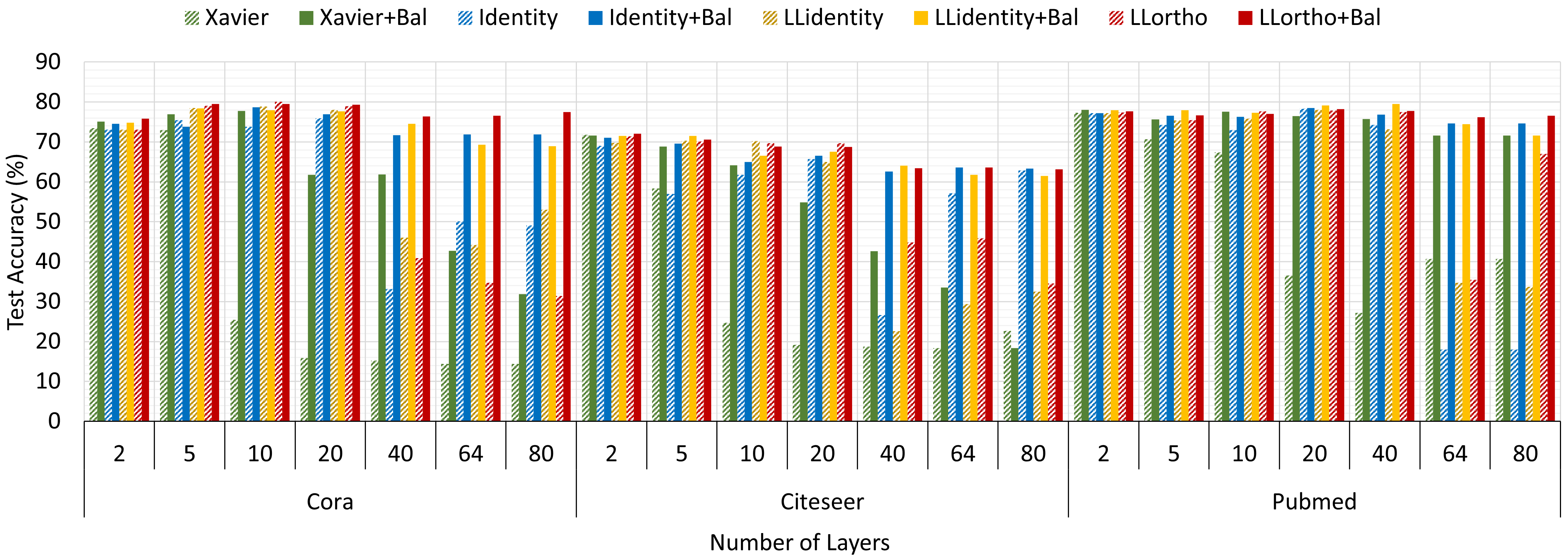}
\caption{GAT network with varying initialization trained using SGD.}
\label{allResultsSGD}
\end{figure}

Note that identity initializations have also been explored in the context of standard feed-forward neural networks. While they tend to work in practice, the lack of induced feature diversity can be problematic from a theoretical point of view (see e.g. for a counter-example \cite{bartlett2018gradient}).
However, potentially due to the reasons regarding feature diversity and ReLU activation discussed above, the balanced looks-linear random orthogonal initialization (LLortho+Bal.) outperforms initialization with identity matrices (see Fig. \ref{allResultsSGD}).
In most cases, the balanced versions outperform the imbalanced version of the base initialization and the performance gap becomes exceedingly large as network depth increases.

\paragraph{Applicability to other MPGNNs}
 As GAT is a generalization of the GCN, all theorems are also applicable to GCNs (where the attention parameters are simply $0$). We provide additional experiments in Table \ref{tab:gcnResults}, comparing the performance of GCN models of varying depths with imbalanced and balanced initializations trained on Cora using the SGD optimizer. We use the same hyperparameters as reported for GAT. As expected, the trainability benefits of a balanced initialization are also evident in deeper GCN networks.

\begin{table}[h]
 \caption{GCN with varying initialization trained using SGD.}
\label{tab:gcnResults}
\centering
\begin{tblr}{
  cells = {c},
  cell{2}{1} = {r=7}{},
  cell{9}{1} = {r=7}{},
  hline{1,16} = {-}{0.08em},
  hline{2,9} = {-}{0.05em},
}
Dataset  & Depth & Xavier           & Xavier+Bal       & LLortho          & LLortho+Bal      \\
Cora     & $2$   & $77.8 \pm 0.9$  & $80.5 \pm 0.5$  & $78.0 \pm 0.3$  & $\mathbf{80.9 \pm 0.4}$ \\
         & $5$   & $73.2 \pm 3.4$  & $78.3 \pm 0.8$  & $\mathbf{80.3 \pm 0.8}$  & $79.6 \pm 1.0$  \\
         & $10$  & $24.1 \pm 4.5$  & $77.6 \pm 2.0$  & $80.0 \pm 1.1$  & $\mathbf{80.0 \pm 0.9}$  \\
         & $20$  & $14.4 \pm 11.2$ & $62.8 \pm 3.6$  & $78.7 \pm 0.5$  & $\mathbf{78.8 \pm 1.5}$  \\
         & $40$  & $13.4 \pm 0.9$  & $65.9 \pm 7.3$  & $28.3 \pm 16.2$ & $\mathbf{77.1 \pm 0.9}$  \\
         & $64$  & $9.8 \pm 5.4$   & $33.0 \pm 13.4$ & $27.3 \pm 12.7$ & $\mathbf{76.7 \pm 1.3}$  \\
         & $80$  & $12.4 \pm 19.3$ & $33.8 \pm 12.9$ & $38.9 \pm 21.3$ & $\mathbf{77.1 \pm 1.3}$  \\
Citeseer & $2$   & $66.6 \pm 20.0$ & $71.3 \pm 1.8$  & $66.0 \pm 3.2$  & $\mathbf{72.3 \pm 0.9}$  \\
         & $5$   & $60.9 \pm 12.3$ & $66.9 \pm 15.0$ & $69.0 \pm 6.4$  & $\mathbf{70.1 \pm 1.8}$  \\
         & $10$  & $23.8 \pm 36.8$ & $66.0 \pm 5.9$  & $\mathbf{70.6 \pm 0.9}$  & $69.8 \pm 10.9$ \\
         & $20$  & $16.4 \pm 18.2$ & $47.9 \pm 10.0$ & $67.0 \pm 8.6$  & $\mathbf{69.7 \pm 4.5}$  \\
         & $40$  & $13.9 \pm 56.8$ & $37.3 \pm 92.8$ & $44.8 \pm 6.8$  & $\mathbf{64.7 \pm 13.6}$ \\
         & $64$  & $13.8 \pm 41.4$ & $29.5 \pm 15.0$ & $37.3 \pm 79.6$ & $\mathbf{66.3 \pm 0.5}$  \\
         & $80$  & $12.4 \pm 42.7$ & $25.8 \pm 3.6$  & $30.1 \pm 21.8$ &$\mathbf{64.1 \pm 3.2}$  
\end{tblr}
\end{table}

The underlying principle of a balanced network initialization holds in general. However, adapting the balanced initialization to different methods entails modification of the conservation law derived for GATs to correspond to the specific architecture of the other method. For example, the proposed balanced initialization can be applied directly to a more recent variant of GAT, 
$\omega$GAT \cite{wgat}, for which the same conservation law holds and also benefits from a balanced initialization. We conduct additional experiments to verify this. The results in Fig. \ref{wGATonCora} follow a similar pattern as for GATs: a balanced orthogonal initialization with looks linear structure (LLortho+Bal) of $\omega$GAT performs the best, particularly by a wide margin at much higher depths (64 and 80 layers).

\begin{figure}[h]
\centering
\includegraphics[width=0.99\linewidth]{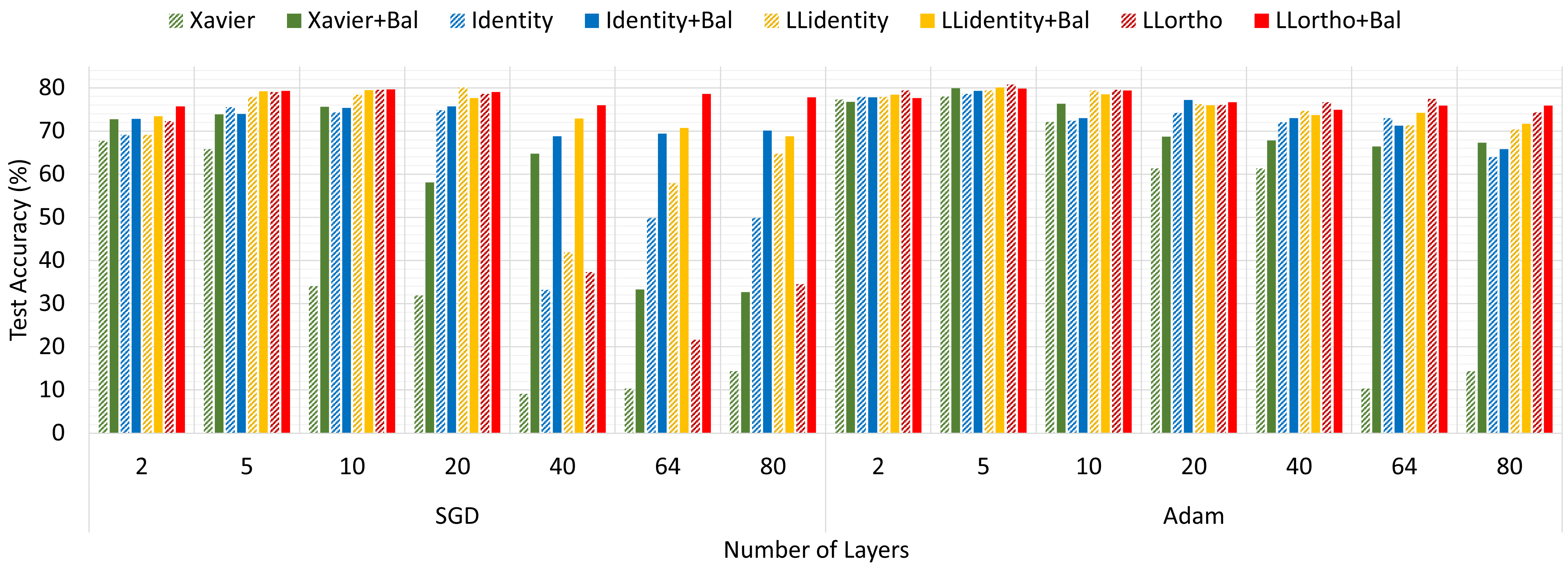}
\caption{$\omega$GAT network with varying trained on Cora. }
\label{wGATonCora}
\end{figure}

In this work, we focus our exposition on GATs and take the first step in modeling the training dynamics of attention-based models for graph learning. An intriguing direction for future work is to derive modifications in the conservation law for other attention-based models utilizing the dot-product self-attention mechanism such as SuperGAT \cite{superGAT} and other architectures based on Transformers\cite{transformer}, that are widely used in Large Language Models (LLMs) and have recently also been adapted for graph learning \cite{graphormer,gtn}.

\section{Datasets Summary}

We used nine benchmark datasets for the transductive node classification task in our experiments, as summarized in Table $\ref{tab:datasets}$.

For the Platenoid datasets (Cora, Citeseer, and Pubmed) \cite{yang2016revisiting}, we use the graphs provided by Pytorch Geometric (PyG), in which each link is saved as an undirected edge in both directions. However, the number of edges is not exactly double (for example, $5429$ edges are reported in the raw Cora dataset, but the PyG dataset has $10556<(2\times5429)$) edges as duplicate edges have been removed in preprocessing. We also remove isolated nodes in the Citeseer dataset.

The WebKB (Cornell, Texas, and Wisconsin), Wikipedia (Squirrel and Chameleon) and Actor datasets \cite{geomgcn}, are used from the replication package provided by \cite{criticalHet}, where duplicate edges are removed. 

All results in this paper are according to the dataset statistics reported in Table \ref{tab:datasets}.

\begin{table}[h]
 \caption{Summary of datasets used in experiments.}
\label{tab:datasets}
\centering
\begin{tblr}{
  row{1} = {c},
  cell{2}{2} = {c},
  cell{2}{3} = {c},
  cell{2}{4} = {c},
  cell{2}{5} = {c},
  cell{2}{6} = {c},
  cell{2}{7} = {c},
  cell{2}{8} = {c},
  cell{2}{9} = {c},
  cell{2}{10} = {c},
  cell{3}{2} = {c},
  cell{3}{3} = {c},
  cell{3}{4} = {c},
  cell{3}{5} = {c},
  cell{3}{6} = {c},
  cell{3}{7} = {c},
  cell{3}{8} = {c},
  cell{3}{9} = {c},
  cell{3}{10} = {c},
  cell{4}{2} = {c},
  cell{4}{3} = {c},
  cell{4}{4} = {c},
  cell{4}{5} = {c},
  cell{4}{6} = {c},
  cell{4}{7} = {c},
  cell{4}{8} = {c},
  cell{4}{9} = {c},
  cell{4}{10} = {c},
  cell{5}{2} = {c},
  cell{5}{3} = {c},
  cell{5}{4} = {c},
  cell{5}{5} = {c},
  cell{5}{6} = {c},
  cell{5}{7} = {c},
  cell{5}{8} = {c},
  cell{5}{9} = {c},
  cell{5}{10} = {c},
  cell{6}{2} = {c},
  cell{6}{3} = {c},
  cell{6}{4} = {c},
  cell{6}{5} = {c},
  cell{6}{6} = {c},
  cell{6}{7} = {c},
  cell{6}{8} = {c},
  cell{6}{9} = {c},
  cell{6}{10} = {c},
  cell{7}{2} = {c},
  cell{7}{3} = {c},
  cell{7}{4} = {c},
  cell{7}{5} = {c},
  cell{7}{6} = {c},
  cell{7}{7} = {c},
  cell{7}{8} = {c},
  cell{7}{9} = {c},
  cell{7}{10} = {c},
  cell{8}{2} = {c},
  cell{8}{3} = {c},
  cell{8}{4} = {c},
  cell{8}{5} = {c},
  cell{8}{6} = {c},
  cell{8}{7} = {c},
  cell{8}{8} = {c},
  cell{8}{9} = {c},
  cell{8}{10} = {c},
  hline{1,9} = {-}{0.08em},
  hline{2} = {-}{0.05em},
}
Dataset       & Cora    & Cite. & Pubmed  & Cham. & Squi. & Actor   & Corn. & Texas  & Wisc. \\
\# Nodes      & $2708$  & $3327$   & $19717$ & $2277$    & $5201$   & $7600$  & $183$   & $183$  & $251$     \\
\# Edges      & $10556$ & $9104$   & $88648$ & $31371$   & $198353$ & $26659$ & $277$   & $279$  & $450$     \\
\# Features   & $1433$  & $3703$   & $500$   & $2325$    & $2089$   & $932$   & $1703$  & $1703$ & $1703$    \\
\# Classes    & $7$     & $6$      & $3$     & $5$       & $5$      & $5$     & $5$     & $5$    & $5$       \\
\# Train      & $140$   & $120$    & $60$    & $1092$    & $2496$   & $3648$  & $87$    & $87$   & $120$     \\
\# Validation & $500$   & $500$    & $500$   & $729$     & $1664$   & $2432$  & $59$    & $59$   & $80$      \\
\# Test       & $1000$  & $1000$   & $1000$  & $456$     & $1041$   & $1520$  & $37$    & $37$   & $51$      
\end{tblr}
\end{table}

\begin{figure}[h]
\centering
\begin{subfigure}{.99\textwidth}
\centering
\includegraphics[width=0.99\linewidth]{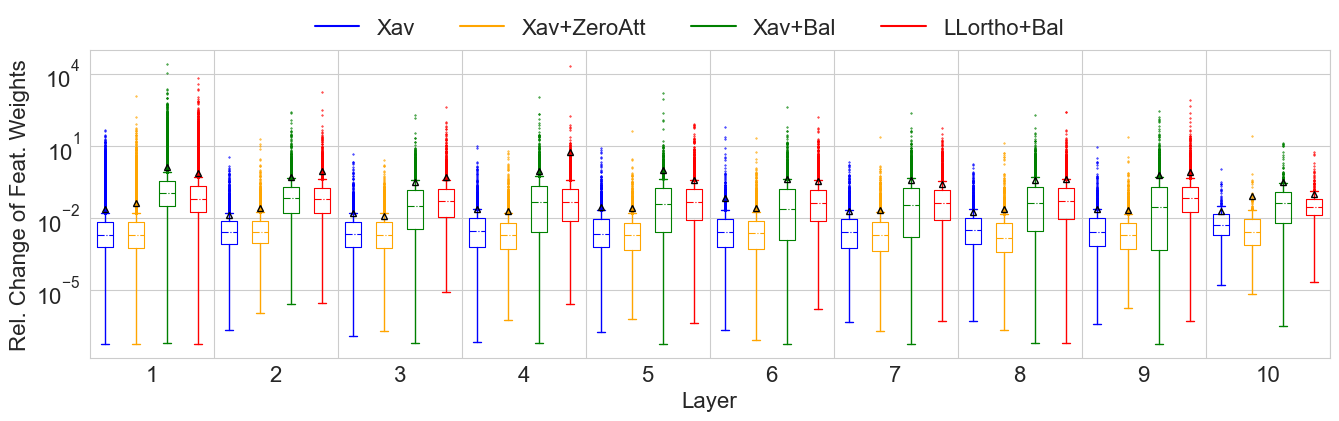}
\includegraphics[width=0.99\linewidth]{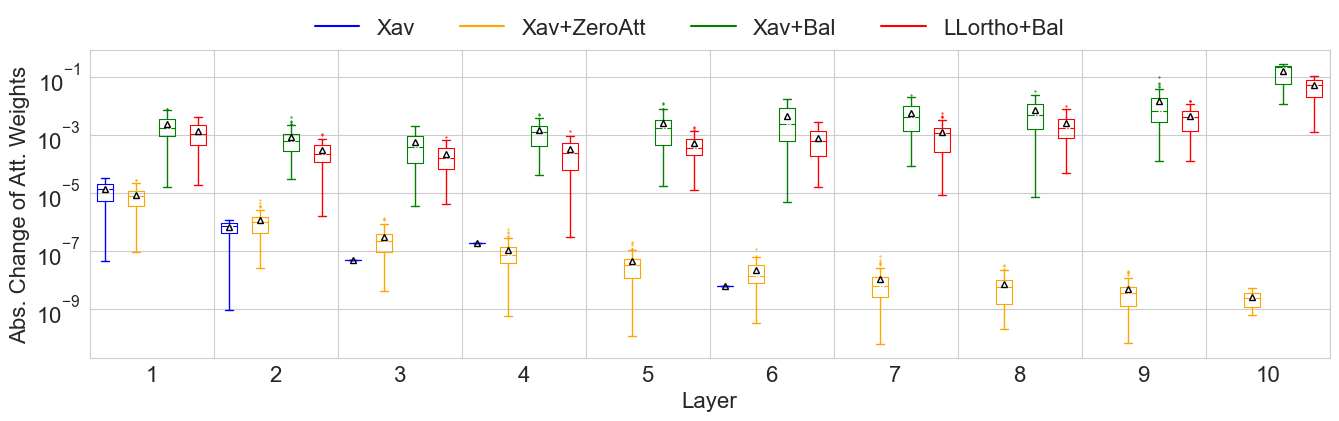}
\caption{$L=10$: The balanced initialization allows larger changes in a higher number of parameters in $W^l$  across all layers $l$ with the highest margin in $l=1$. The change distribution for the parameters in $a^l$ is missing for $l=5$ and $l\in[7,10]$ because these parameters remain unchanged (see Fig.\ref{zeroChange}).}
\end{subfigure}\\
\begin{subfigure}{.99\textwidth}
\centering
\includegraphics[width=0.99\linewidth]{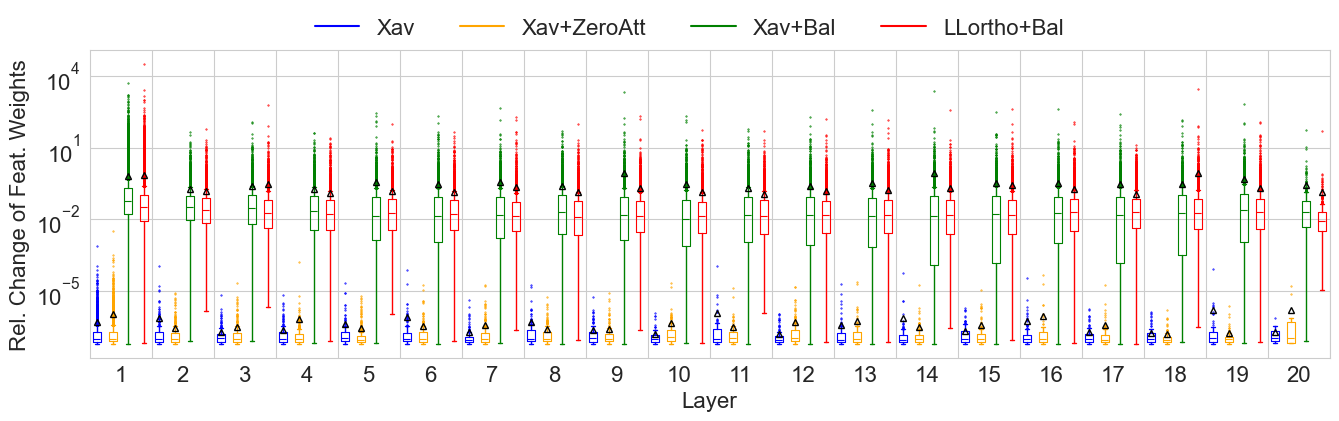}
\includegraphics[width=0.99\linewidth]{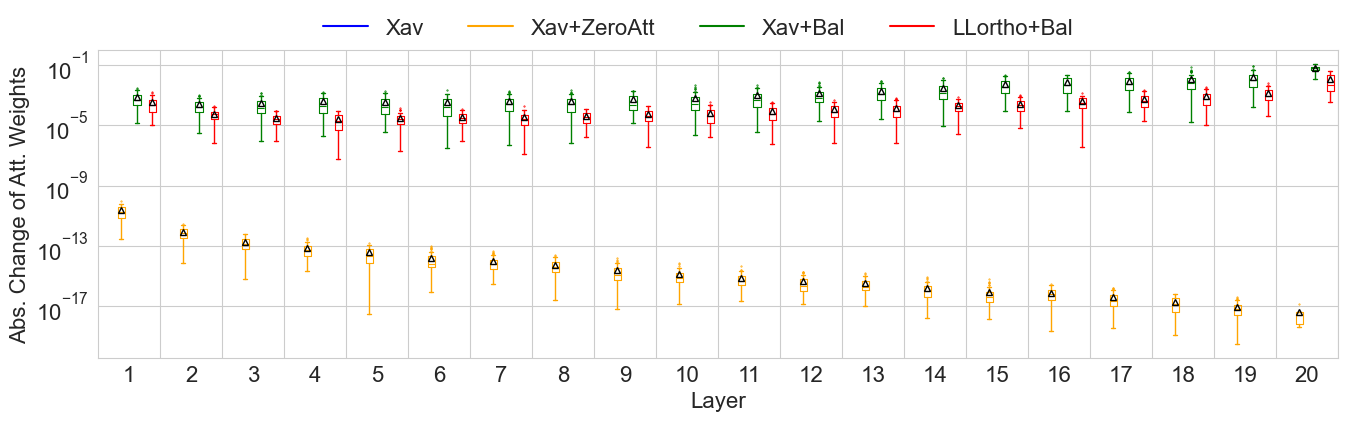}
\caption{$L=20$: The same pattern as in $L=10$ is seen. However, the difference between the trainability of models with unbalanced and balanced initialization becomes even more prominent. No attention parameters change at all with standard Xavier initialization (and hence their change distribution is not visible in the plot).}
\end{subfigure}
\caption{Distributions of the relative and absolute change in the values of feature transformation parameters $W^l$ and attention parameters $a^l$, respectively, when trained using SGD with unbalanced (Xav. and Xav+ZeroAtt) and balanced (Xav+Bal and LLortho+Bal) initialization. The black markers in each standard box-whisker plot denote the mean. In general, larger changes occur in attention parameters at later layers closer to the output of the network, whereas feature parameters change more in the earlier layers at the input of the network. We observe this also from the perspective of relative gradients, as higher gradients cause higher parameter changes. }
\label{nonZeroChange}
\end{figure}

\begin{figure}[h]
\centering
\begin{subfigure}{.99\textwidth}
\centering
\includegraphics[width=0.99\linewidth]{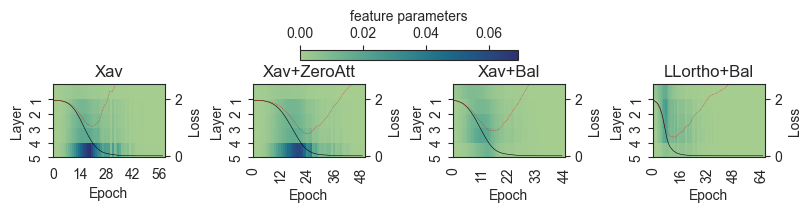}
\includegraphics[width=0.99\linewidth]{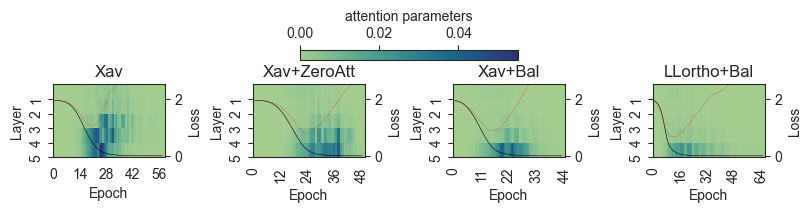}
\caption{$L=5$: test accuracy (left to right) is $71.6\%$, $76.2\%$, $77.3\%$, and $80.5\%$.}
\end{subfigure}\\
\begin{subfigure}{.99\textwidth}
\centering
\includegraphics[width=0.99\linewidth]{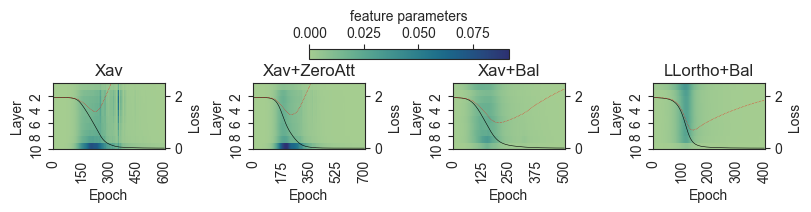}
\includegraphics[width=0.99\linewidth]{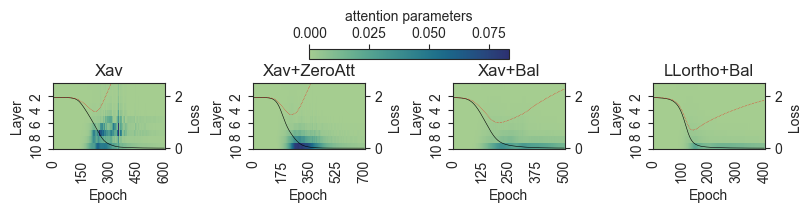}
\caption{$L=10$: test accuracy (left to right) is $63.2\%$, $68.8\%$, $71.8\%$, and $78.0\%$.}
\end{subfigure}
\caption{ Layer-wise relative gradient norms of GAT parameters trained with Adam, which in itself improves trainability over SGD, especially for $L=10$ (see Fig.\ref{featGradientNorms10L}) even with standard initialization. However, with a balanced initialization, the training is faster and the model also generalizes better, as evident from the significant margin in test accuracy. A general noticeable trend, particularly with balanced initialization, is that the attention parameters have higher gradients at the end of the network (and thus change more in the later layers (see Fig. \ref{nonZeroChange})) whereas the feature parameters have high gradients at the start of the network. Also, it can be seen that the feature parameters begin to change a few epochs before the attention parameters change noticeably, while the attention parameters continue to change for a few more epochs after the gradients of feature parameters have dropped. We speculate a representational benefit could drive this behaviour, i.e. with depth, the learned representation of neighbors becomes increasingly informative for a node and thus leads to a higher activity of the attention mechanism.}
\label{gradNormAdam}
\end{figure}

\end{document}